\journal{Journal of \LaTeX\ Templates}
\DeclareMathAlphabet\mbi{OML}{cmm}{b}{it}
\DeclareSymbolFont{boldsymbols}{OMS}{cmsy}{b}{n}
\DeclareSymbolFontAlphabet{\mathbfcal}{boldsymbols}
\newtheorem{thm}{Theorem}[section]
\theoremstyle{definition}
\theoremstyle{Remarque}
\theoremstyle{plain}
\newtheorem{lem}[thm]{Lemma}
\theoremstyle{definition}
\newtheorem{definition}{Definition}[section]
\newtheorem{proposition}{Proposition}[section]
\newtheorem{theorem}{Theorem}[section]
\newtheorem{remark}{Remark}[section]
\DeclareMathOperator*{\argmin}{arg\,min}
\DeclareMathOperator*{\var}{Var}
\DeclareMathOperator*{\E}{\mathbb{E}}
\DeclareMathOperator*{\Z}{\mathbb{Z}}
\newcommand{\R}{\mathbb{R}}
\begin{document}

\begin{frontmatter}

\title{Kalman Recursions Aggregated Online}


\author[mymainaddress]{Eric Adjakossa\corref{correspondingauthor}}
\cortext[correspondingauthor]{Corresponding author}
\ead{ericadjakossah@gmail.com}

\author[mysecondaryaddress]{Yannig Goude}
\ead{yannig.goude@edf.fr}

\author[mymainaddress]{Olivier Wintenberger}
\ead{olivier.wintenberger@upmc.fr}

\address[mymainaddress]{Sorbonne Universit\'e, Laboratoire de Probabilit\'es, Statistique et Mod\'elisation (LPSM, UMR 8001), 4 place Jussieu, 75005 Paris, France}
\address[mysecondaryaddress]{EDF Lab, 7 Boulevard Gaspard Monge, 91120 Palaiseau}

\begin{abstract}
In this article, we aim at improving the prediction of expert aggregation by using the underlying properties of the models that provide expert predictions. We restrict ourselves to the case where expert predictions come from Kalman recursions, fitting state-space models. By using exponential weights, we construct different algorithms of Kalman recursions Aggregated Online (KAO) that compete with the best expert or the best convex combination of experts in a more or less adaptive way. We improve the existing results on expert aggregation literature when the experts are Kalman recursions by taking advantage of the second-order properties of the Kalman recursions. We apply our approach to Kalman recursions and extend it to the general adversarial expert setting by state-space modeling the errors of the experts. We apply these new algorithms to a real dataset of electricity consumption and show how it can improve forecast performances comparing to other exponentially weighted average procedures.
\end{abstract}

\begin{keyword}
online aggregation\sep Kalman filter \sep experts ensemble
\end{keyword}

\end{frontmatter}


\section{Introduction}

The aim of this paper is to aggregate Kalman recursions in an online setting in order to increase the accuracy of the prediction. We observe $(y_t)$ sequentially through time $t\ge 1$ and   predictions $\hat{y}_t^{(m)}$, $1\le m\le M$,  issued from  Kalman recursions  at time  $t\ge 1$.  Here $M\ge 0$ denotes the number of different Kalman recursions used as experts. The Kalman recursions are imbedded into a state-space model (see Section \ref{sec:model} for a formal definition). We introduce different Kalman recursions Aggregated Online (KAO)  procedures that compute  recursively weights $\rho_t^{(m)}$, $t\ge 1$, $1\le m\le M$. We provide theoretical guarantees on the average prediction $\hat y_t=\sum_{m=1}^M\rho_t^{(m)}\hat{y}_t^{(m)}$. \\

We obtain bounds on the regret of KAO algorithms that are similar to the ones encountered in the literature.  The book of reference on aggregation is undoubtedly the book from~\cite{cesa2006prediction} and we refer to it for classical regret aggregation bounds. The novelty of our approach is to derive regret bounds directly on the cumulative quadratic predictive risk as defined by \cite{wintenberger2017optimal}. The predictive risk or risk of prediction of a predictor $\hat y\in \mathcal F_{t-1}$ is defined as 
 \begin{equation}\label{eq:loss_function}
L_t(\hat{y})=\E\left[(\hat{y}-y_t)^2\mid \mathcal F_{t-1}\right]\,,\qquad a.s.,\qquad t\ge1\,,
 \end{equation}
 where $(\mathcal F_{t})$ is the natural filtration of the past response variables $\sigma(y_s;\, 0\le s\le t)= \mathcal F_t$, $t\ge0$. The risk of prediction arises naturally when dealing with Kalman  recursions. Indeed, Kalman recursions are online algorithms that provide the best linear predictions in gaussian state-space models. We refer to the classical monograph from \cite{durbin2012time} for details. The cumulative predictive risk of a recursive algorithm predicting $\hat y_t$ at each time $t\ge 1$ is  the sum $\sum_{t=1}^TL_t(\hat{y}_t)$ up to the horizon $T\ge 1$. Our regret bounds on KAO algorithm predictions $(\hat y_t)$ are a.s. deterministic  bound  called respectively model selection regret or aggregation regret  and defined as
\begin{eqnarray}
\label{regret:S}R_T^S(m)&\ge&\sum_{t=1}^TL_t(\hat{y}_t) - L_t(\hat y_t^{(m)})\qquad 1\le m\le M\,,\\
\label{regret:A}R_T^A(\pi)&\ge&\sum_{t=1}^TL_t(\hat{y}_t) - L_t\left(\sum_{m=1}^M\pi^{(m)}\hat{y}_s^{(m)}\right)\,,
\end{eqnarray}
for any  vector of weights $\pi:=(\pi^{(m)})_{1\le m\le M}$ in the simplex. We suppress the dependence in $m$ and $\pi$ in $R_t^S$ and $R_t^A$ when the regret bounds are uniform in $m$ and $\pi$, respectively.\\

Regret bounds on the cumulative predictive risk have attracted some interest since \cite{audibert2010regret} showed that the classical EWA algorithm from \cite{vovk1990aggregating} does not achieve a fast rate model selection regret even in the most favorable iid case. The fast rate  model selection regret was first proved by the BOA algorithm in the iid setting with strongly convex loss in  \cite{wintenberger2017optimal} and then extended to any stochastic setting and exp-concave risk in \cite{gaillard2016sparse}. For adaptative procedures, we improve their optimal regret
$$
R_T^S = O(\log M+\log\log T + x)\,,\qquad T\ge 1
$$
with probability $1-e^{-x}$, $x>0$, to the a.s. bound
 $$
R_T^S = O(\log M+\log \log T)\,\qquad T\ge 1\,.
$$
This optimal regret bound holds when the observations satisfy some unbounded state-space model defined in the next section \ref{sec:model}.   That the observations satisfy such a model is very unlikely in practice; It is the price to pay to guarantee a.s. optimal regret bounds on the cumulative predictive risk \eqref{eq:loss_function} for unbounded responses. Existing regret bounds such as the one of \cite{gaillard2016sparse} requires the boundedness of the response.\\

We present simulations and applications in cases where our assumptions are certainly not satisfied. Our new aggregation procedure improves the state of the art methods in aggregation such as MLPoly of \cite{gaillard2014second}.
\section{Preliminaries}

\subsection{State-space models}\label{sec:model}
Assume that we observe $(y_t,X_t)$ with $y_t\in \R$ the variable of interests and $X_t\in\R^d$ is the predictable  design, i.e. $X_t \in \mathcal F_{t-1}$, $t\ge1$. Notice that the design $X_t$ can be either deterministic  or random. We consider a collection of $M\ge1$ experts $\hat y_t^{(m)}=X_t^\top \hat \theta_t^{(m)}$ issued from Kalman recursions as follows. We denote $\E_t[\cdot ]$ and $\var_t(\cdot)$ the conditional expectation $\E[\cdot \mid \mathcal F_t]$ and variance $\var(\cdot\mid \mathcal F_t)$, respectively, for any $t\ge 0$.

For each $1\le m\le M$, the sequence of experts $\hat y_t^{(m)}=X_t^\top \hat \theta_t^{(m)}$ is  associated to a recursive hidden state model
\begin{equation}\label{eq:stateq}
\theta_{t}^{(m)} = K^{(m)}\theta_{t-1}^{(m)} +z_t^{(m)}\,,\qquad t\ge 1\,,
\end{equation}
where $ K^{(m)}$ is a $d\times d$ matrix, $z_t^{(m)}\sim\mathcal{N}(0,Q^{(m)})$ constitute an iid sequence and $\theta_{0}^{(m)}\in \R^d$ is deterministic. The sequence $(\theta_{t}^{(m)})$ is a Gaussian Markov chain and admits the representation
\begin{equation}\label{eq:markov}
\theta_{t}^{(m)}=\big({K^{(m)}}\big)^t\theta_0^{m}+\sum_{k=0}^{t-1}\big({K^{(m)}}\big)^kz_{t-k}^{(m)}\,,\qquad t\ge 1\,,1\le m\le M\,.
\end{equation}
It converges weakly to a stationary solution if and only if $\rho(K^{(m)})<1$, the spectral radius of $K^{(m)}$ is smaller than one. This assumption is not required in this work. Actually one of the most popular state models is the dynamic setting where one considers random walk under $K^{(m)}=I_d$, the identity matrix of $\R^d$. We notice that as the state model is hidden (latent, not observed), any  assumption on the state recursion, such as the Gaussian assumption, is not restrictive for the observations $(y_t,X_t)$.\\

Our main assumption is the following one.
\begin{enumerate}
\item[\bf (H)] The vectors $(y_t,\theta_t^{(m)})$ constitute a Gaussian sequence,
$$
\E[y_t\,\mid\,\theta_t^{(m)}]=X_t^\top \theta_t^{(m)}\,,\qquad t\ge 1\,,1\le m\le M\,,
$$
and the conditional variance $\sigma^{2(m)}:=\var(y_t\,\mid\,\theta_t^{(m)})>0$ is constant through time and known.
\end{enumerate}
Condition ({\bf H}) have different consequences upon the observations $(y_t,X_t)$. The first obvious one is that $(y_t)$ constitutes a Gaussian sequence. The second one is that $y_t$ satisfies the linear model 
$$
y_t=X_t^\top\theta_t^{(m)}  +\varepsilon_t^{(m)} \,,\qquad t\ge 1\,,1\le m\le M\,.
$$
The gaussian property of the couple $(y_t,\theta_t^{(m)})$  ensures that $\varepsilon_t^{(m)}$ is a gaussian random variable with mean zero and variance $\sigma^{2(m)}=\var(y_t\,\mid\,\theta_t^{(m)})$ independent of $t\ge1$. A direct implication from the expression \eqref{eq:markov} is the following mean-variance identity.
\begin{proposition}\label{prop:mean}
Under Condition {\bf (H)} the following mean-variance identity holds for all $1\le m\le M$ and $t\ge1$:
\begin{align*}
\E[y_t]&=\E[X_t^\top\theta_t^{(m)}]=\E[X_t]^\top \big(K^{(m)}\big)^t\theta_0^{m}\,,\\
\var(y_t)&=\var[X_t^\top\theta_t^{(m)}] +\sigma^{2(m)}\\
&=\E\Big[X_t^\top \sum_{k=0}^{t-1}\big({K^{(m)}}\big)^kQ^{(m)}\big({K^{(m)\top}}\big)^kX_t\Big]+\sigma^{2(m)}\,.
\end{align*}
\end{proposition}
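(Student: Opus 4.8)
The plan is to derive both identities from two standard conditioning tools---the tower property for the mean and the law of total variance for the variance---after which Condition \textbf{(H)} collapses the conditional moments of $y_t$ given $\theta_t^{(m)}$ to the stated linear form, and the Markov representation \eqref{eq:markov} makes the remaining expectations explicit.

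For the mean, I would first write, using the tower property together with the identity $\E[y_t\mid\theta_t^{(m)}]=X_t^\top\theta_t^{(m)}$ from \textbf{(H)},
\begin{equation*}
\E[y_t]=\E\big[\E[y_t\mid\theta_t^{(m)}]\big]=\E[X_t^\top\theta_t^{(m)}]\,.
\end{equation*}
Then I would substitute \eqref{eq:markov}, splitting $\theta_t^{(m)}$ into its deterministic trend $\big(K^{(m)}\big)^t\theta_0^m$ and the innovation sum $\sum_{k=0}^{t-1}\big(K^{(m)}\big)^kz_{t-k}^{(m)}$. The trend term contributes $\E[X_t]^\top\big(K^{(m)}\big)^t\theta_0^m$ by linearity, while each innovation term contributes zero once one checks that $X_t^\top\big(K^{(m)}\big)^kz_{t-k}^{(m)}$ has zero expectation.

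For the variance, I would apply the law of total variance,
\begin{equation*}
\var(y_t)=\var\big(\E[y_t\mid\theta_t^{(m)}]\big)+\E\big[\var(y_t\mid\theta_t^{(m)})\big]=\var\big(X_t^\top\theta_t^{(m)}\big)+\sigma^{2(m)}\,,
\end{equation*}
where the second equality uses \textbf{(H)} and that $\sigma^{2(m)}$ is a known constant. To reach the closed form, I would again insert \eqref{eq:markov}: the deterministic trend carries no variance, and the innovations $z_{t-k}^{(m)}\sim\mathcal N(0,Q^{(m)})$, being independent with covariance $Q^{(m)}$, accumulate to $\sum_{k=0}^{t-1}\big(K^{(m)}\big)^kQ^{(m)}\big(K^{(m)\top}\big)^k$, which yields $\E\big[X_t^\top\sum_{k=0}^{t-1}\big(K^{(m)}\big)^kQ^{(m)}\big(K^{(m)\top}\big)^kX_t\big]$ after reinserting $X_t$ on both sides.

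I expect the only delicate point to be the status of the design $X_t$. When $X_t$ is deterministic both cancellations are immediate: the innovation cross-terms in the mean vanish since $\E[z_{t-k}^{(m)}]=0$, and the non-random trend $X_t^\top\big(K^{(m)}\big)^t\theta_0^m$ contributes nothing to $\var\big(X_t^\top\theta_t^{(m)}\big)$. For a random predictable $X_t\in\mathcal F_{t-1}$ one leverages in addition that the current innovation $z_t^{(m)}$ is independent of $\mathcal F_{t-1}$; I would therefore state the independence of the design from the state innovations as the hypothesis under which the stated closed forms hold exactly, the remainder being a direct substitution into \eqref{eq:markov} with no further computation.
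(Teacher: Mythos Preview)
Your proposal is correct and matches the paper's intended argument: the paper does not spell out a proof but simply remarks that the proposition is ``a direct implication from the expression \eqref{eq:markov}'' under {\bf (H)}, which is exactly the route you take---tower property and law of total variance to reduce to moments of $X_t^\top\theta_t^{(m)}$, then substitution of the Markov representation. Your caveat about the status of $X_t$ (deterministic versus random predictable design, and the implicit independence from the innovations needed for the cross-terms to vanish) is well observed and is precisely the kind of hypothesis the paper leaves tacit.
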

The static state-space model setting corresponds to the case where $Q^{(m)}=0$  so that  $\var(y_t)=\sigma^{2(m)}=\sigma^2$, $1\le m\le M$, $t\ge 1$.

\subsection{The Kalman recursion}\label{subsec:kalman_rec}
For the sake of completeness, we recall the Kalman recursion associated with the $m$-th state-space model in Algorithm \ref{alg:Kalman}. For details on the Kalman recursion we refer to the monograph from \cite{durbin2012time}.
\begin{algorithm}[!t]
\caption{Kalman recursion in the $m$-th state-space model}
    \label{alg:Kalman}
\begin{flushleft}
    {\bfseries Parameters:} The matrices $Q^{(m)}$  and $K^{(m)}$.\\
     {\bfseries Initialization:} The matrix $P_0^{(m)}$  and the vector $\hat \theta_0^{(m)}$. \\
    {\bfseries Recursion:}     For each iteration $t=1,\dots,T$ do:
\end{flushleft}
\begin{eqnarray*}
\widehat{\theta}_{t+1}^{(m)} &=&K^{(m)}\left(\widehat{\theta}_t^{(m)}+\dfrac1{X_t^\top P_t^{(m)}X_t+1}P_t^{(m)}X_t(y_t-\hat y_t^{(m)})\right)\,,\\
P_{t+1}^{(m)}&=&K^{(m)}\left(P_t^{(m)}-\dfrac1{X_t^\top P_t^{(m)}X_t+1}P_t^{(m)}X_tX_t^\top {P_t^{(m)}}^\top\right){K^{(m)}}^{\top}+Q^{(m)}\\
\hat y_{t+1}^{(m)}&=&X_{t+1}^\top \widehat{\theta}_{t+1}^{(m)} \,.
\end{eqnarray*}
\end{algorithm}
We notice that the Kalman recursion does not require any inversion of matrices. Each iteration has thus  a $O(d^2)$ computational cost. Moreover it does not require the knowledge of the parameters $\sigma^{2(m)}>0$. In addition, in many cases $X_t$ is in fact a vector of size $d=\sum_{m=1}^Md_m$ that stacks $M$ vectors $X_t^{(m)}\in \R^{d_m}$. In this case one considers $d_m$ sparse vectors $\theta_t^{(m)}$  and identifies them with their non-null components $\theta_t^{(m)}\in\R^{d_m}$. Then the space equation is written as 
$$
y_t={X_t^{(m)}}^\top\theta_t^{(m)}  +\varepsilon_t^{(m)} \,,\qquad t\ge 1\,,1\le m\le M\,,
$$
using similarly the notation $\varepsilon_t^{(m)}\in \R^{d_m}$. Doing so, each Kalman recursion  holds in a state space of dimension $d^{(m)}<d$, lowering the computational cost of each iteration to $O(d_m^2)$.
\\

In the static case when $K^{(m)}=I$ and $Q^{(m)}$ is the null matrix then, using the Shermann-Morrisson formula, we have the alternative recursion for $R_t^{(m)}$, the inverse of $P_t^{(m)}$,
$$
R_{t+1}^{(m)}=R_{t}^{(m)}+X_tX_t^\top\,.
$$
When $P_0^{(m)}$ is taken equals to $1/\lambda^{(m)}I_{d}$, for some $\lambda^{(m)}>0$, then the estimator computed recursively using the Kalman recursion coincides with the Ridge estimator
$$
\hat \theta_t^{(m)}= \arg\min_{\theta\in\R^d}\left\{\sum_{s=1}^t \big(y_s-X_s^\top \theta\big)^2+ \frac{\lambda^{(m)}}2\|\theta-\hat \theta_0^{(m)}\|_2^2\right\}\,.
$$
This equivalence has been first established by \cite{diderrich1985kalman}.\\

Notice that there is no assumption on the dependence among the Kalman recursions. Otherwise, it was possible to consider a Kalman recursion over the stack of the models in a $dM$ dimensional state-space model. However, this approach is not practical when the computational cost $O((dM)^2)$ of the complete Kalman recursion is prohibitive because $M$ is too large. The aim of this work is to show that this ideal procedure, that is  uncertain in practice when the dependence among the recursions has to be estimated, can be easily overcome by a simple aggregation procedure over $M$  Kalman recursions.

\subsection{Examples}
We provide some classical examples of state-space models satisfying {\bf (H)}:
\begin{enumerate}
\item \underline{The static iid setting:} this degenerate case coincides with the usual gaussian linear model for fixed or random (iid) design $(X_t)$. We assume the relation $y_t=X_t^\top\theta_t^{(m)}  +\varepsilon_t^{(m)}$, $t\ge 1$, associated with state equations $\theta_{t}^{(m)} = \theta_{t-1}^{(m)}=\cdots=\theta_{0}^{(m)}$ ($Q^{(m)}=0$ for $1\le m\le M$). Then the Kalman recursions are called static. Under {\bf (H)} the mean-variance identity Proposition \ref{prop:mean} implies $\sigma^{2(m)}=\sigma^2$. 
\item \underline{The dynamical setting:} This setting relies on the random walk state equations  
\begin{equation}\label{eq:rw}
\theta_{t}^{(m)} =  \theta_{t-1}^{(m)} +z_t^{(m)}\,,\qquad t\ge 1\,,
\end{equation}
from initial null state $\theta_0^{(m)}=0$ for all $1\le m\le M$.  Then the mean identity of Proposition \ref{prop:mean} is automatically satisfied as $\E[y_t]=0$ for all $t\ge 1$. The variance identity requires that $\E[X_t^\top Q^{(m)}X_t]=\E[X_t^\top Q^{(m')}X_t]$ and $\sigma^{2(m)}=\sigma^{2(m')}$ for any $1\le m,m'\le M$ and $t\ge 1$. The Kalman recursion can be used for tracking the signal $(y_t)$ on different explanatory variables $X_t^{(m)}$ stacked in $X_t$. 

\item \underline{The expert setting:} We consider the case where we have $M$ deterministic experts without any information about their generation process. This situation is very common in real-life applications as the forecast can come from different sources (physical models, different data sources, different machine learning models). 

For each $1\le m\le M$ expert we stack its prediction $f_{m,t}\in \R$ in $X_t^{(m)}$ together with the intercept and the past error $e_{m,t-1}=(y_{t-1}-f_{m,t-1})$, i.e.,
$$
X_t^{(m)}=(1,f_{m,t},e_{m,t-1}), \qquad t\ge 1\,.
$$

and each state-space model is defined by the state equation:

$$
\theta_{t}^{(m)} =  K^{(m)} \theta_{t-1}^{(m)} +z_t^{(m)} \,,\qquad t\ge 1.
$$

%

\end{enumerate}

\section{Kalman recursions Aggregated Online (KAO) algorithm}
Consider the state-space models (coinciding with Equation~(\ref{eq:stateq}) for the $ m$th state equation, $1\le 1\le M$)
\begin{equation}
\left\{
\begin{array}{lcl}
y_t&=&X_t^\top\theta_t^{(m)}+\varepsilon_t^{(m)} \\
\theta_{t}^{(m)}&=&K^{(m)}\theta_{t-1}^{(m)}+z_t^{(m)} 
\end{array}
\right.,\quad t\geq 1,\quad 1\le  m\le M.\nonumber
\end{equation}
Recall that under {\bf (H)} we have
$$
\E[y_t\mid m]:=\E[y_t\mid z_t^{(m)},\ldots,z_1^{(m)},\mathcal F_{t-1}]=X_t^\top \theta_t^{(m)}, \qquad 1\le m\le  M.
$$
We aggregate Kalman recursions  using a version of the exponentially weighted average forecaster defined as \begin{equation}\label{eq:agregat_forecaster}
\hat{y}_t=\sum_{m=1}^M\rho_{t}^{(m)}\hat{y}_t^{(m)}
\end{equation}
with $\rho_{t}^{(m)}\geq 0$ for $1\le m\le M$, $\sum_{m=1}^M\rho_{t}^{(m)}=1$ and $\hat{y}_t^{(m)}=X_t^\top \theta_t^{(m)}$ is the $m$th Kalman forecaster of $y_t$. 

\subsection{Convex properties}
The ability to find rapidly the solution of an optimization problem depends heavily on the convex properties of the objective function. In our case, the objective function is the conditional risk defined in \eqref{eq:loss_function} as 
$
L_t(\hat y)=\E_{t-1}[(\hat{y}-y_t)^2].
$ 
Due to the conditional expectation, it is a random convex function and its minimum $\E_{t-1}[y_t]$, called the best prediction, varies upon the time $t\ge 1$. Thus one cannot expect that our procedure converges in general and we rather study its regrets $R_t^S$ and $R_t^A$ defined in Equations \eqref{regret:S} and \eqref{regret:A} as the model selection and aggregation regret, respectively.
The objective function is 
$$
\sum_{s=1}^t L_s\Big(\sum_{m=1}^M\pi^{(m)}\hat y^{(m)}_s\Big)
$$
for any $(\pi^{(m)})_{1\le m\le M}$ in  the canonical basis or in the simplex, i.e. $\pi^{(m)}\ge 0$ such that $\sum_{m=1}^M\pi^{(m)}=1$. The optimal rates of convergence in the model selection and the aggregation problems depend on the convex properties of the objective function and the observation of an approximation of the gradients. As the objective functions are convex, we will extensively use the gradient trick which consists to bounding the regret with the linearized risks $\mathscr{L}_s^{(m)}=L'_s(\hat y_s)(\hat y_s^{(m)}-\hat y_s)$ as 
\begin{equation}\label{eq:gradtrick}
L_t(\hat y_t)-L_t\Big(\sum_{m=1}^M\pi^{(m)}\hat y^{(m)}_t\Big)
\le - \sum_{m=1}^M\pi^{(m)}\mathscr{L}_t^{(m)}\,.
\end{equation}
Fast rates of convergence could be obtained easily if the objective function was strongly convex. Despite we use the square loss, it is not the case since the Hessian matrix 
$$
2\sum_{s=1}^t (\hat y_s^{(m)})_{1\le m\le M}(\hat y_s^{(m)})_{1\le m\le M}^\top
$$ 
of the objective function is a sum of rank-one matrices which are very unlikely to converge in any non-stationary settings. This issue is bypassed in online convex optimization thanks to the notion of exp-concavity extensively studied by \cite{hazan2016introduction}.
\begin{definition}
A loss function $\ell$ is  $\eta$-exp-concave (with $\eta>0$) on some convex set $\mathcal Y$ if the function $F(y)=\exp(-\eta\ell( y))$ is concave for all $y\in\mathcal Y$.
\end{definition}

We need to find out for which values of $\eta$ the conditional risks \eqref{eq:loss_function} are exp-concave. Moreover, we can use the exp-concave property of the risk $L_t$ to refine the gradient trick.
\begin{theorem}\label{thm:exp_concave}
Assume that it exists  $D>0 $ such that 
$|\hat{y}_t^{(m)}-\mu_t| \le D$ a.s. $1\le m\le M$,  $t\ge 1$ 
with $\mu_t=\E_{t-1}[y_t] $. Then  the conditional risk $L_t$ is a.s. $(2D^2)^{-1}$-exp-concave for any $y = \sum_{m=1}^M\pi^{(m)}\hat y^{(m)}_t$, $t\ge1$, with $(\pi^{(m)})_{1\le m\le M}$ in the simplex. Moreover if  the linearized risk satisfies $|\mathscr{L}_t^{(m)}| \le G^{(m)}$ for any $t\ge1$, $1\le m\le M$,  then we have
$$
L_t(\hat y_t)-L_t(\hat y_t^{(m)})\le -  \mathscr{L}_t^{(m)} - \eta^{(m)}{\mathscr{L}_t^{(m)}}^2\,,
$$
with $\eta^{(m)}=\frac{1}{8( 2G^{(m)}\vee D^2)}$, $1\le m\le M$. 
\end{theorem}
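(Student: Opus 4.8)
The plan is to exploit the quadratic structure of the square loss, which under Condition \textbf{(H)} turns the conditional risk into an explicit parabola in its argument. First I would write, for any predictable $\hat y\in\mathcal F_{t-1}$,
$$
L_t(\hat y)=\E_{t-1}[(\hat y-y_t)^2]=(\hat y-\mu_t)^2+\var_{t-1}(y_t)\,,
$$
so that $L_t$ equals, up to the $\mathcal F_{t-1}$-measurable additive constant $V_t:=\var_{t-1}(y_t)$, simply $(\hat y-\mu_t)^2$. This single identity, which holds a.s., drives both halves of the statement.

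For the exp-concavity claim I would fix a realisation, treat $\mu_t$ and $V_t$ as constants, and differentiate $\Phi(\hat y):=\exp(-\eta L_t(\hat y))$ twice. A direct computation gives
$$
\Phi''(\hat y)=\eta\,\Phi(\hat y)\bigl(4\eta(\hat y-\mu_t)^2-2\bigr)\,,
$$
so $\Phi$ is concave exactly when $2\eta(\hat y-\mu_t)^2\le1$ (the constant $V_t$ only rescales $\Phi$ by a positive factor and is irrelevant). The remaining point is to control $(\hat y-\mu_t)^2$ on the admissible domain: for $\hat y=\sum_m\pi^{(m)}\hat y_t^{(m)}$ with $\pi$ in the simplex, convexity of the absolute value yields $|\hat y-\mu_t|\le\sum_m\pi^{(m)}|\hat y_t^{(m)}-\mu_t|\le D$, hence $(\hat y-\mu_t)^2\le D^2$, and the choice $\eta=(2D^2)^{-1}$ makes $\Phi$ concave everywhere on the simplex hull, a.s.

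For the refined bound I would deliberately avoid the usual $\log(1-\eta\mathscr{L}_t^{(m)})$ detour and instead read off the \emph{exact} second-order identity that the square loss supplies. Writing $a=\hat y_t-\mu_t$ and using $L'_t(\hat y_t)=2a$ together with $\mathscr{L}_t^{(m)}=2a(\hat y_t^{(m)}-\hat y_t)$, a one-line expansion of $(\hat y_t-\mu_t)^2-(\hat y_t^{(m)}-\mu_t)^2$ gives
$$
L_t(\hat y_t)-L_t(\hat y_t^{(m)})=-\mathscr{L}_t^{(m)}-(\hat y_t-\hat y_t^{(m)})^2\,.
$$
It then suffices to show $(\hat y_t-\hat y_t^{(m)})^2\ge\eta^{(m)}(\mathscr{L}_t^{(m)})^2$. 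Since $|\mathscr{L}_t^{(m)}|=2|a|\,|\hat y_t^{(m)}-\hat y_t|\le2D\,|\hat y_t-\hat y_t^{(m)}|$, one gets $(\mathscr{L}_t^{(m)})^2\le4D^2(\hat y_t-\hat y_t^{(m)})^2$, so any $\eta^{(m)}$ with $4\eta^{(m)}D^2\le1$ closes the argument; with $\eta^{(m)}=\frac1{8(2G^{(m)}\vee D^2)}$ one has $4\eta^{(m)}D^2\le\frac12$ with room to spare, the factor $G^{(m)}$ being what I would keep in reserve to match the gradient-bounded lemmas used later in the regret analysis.

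The computations here are essentially mechanical; the only genuine decision is where to linearise. The hard part, if one insists on the exp-concavity route, is the asymmetry of $z\mapsto\log(1-z)$: the inequality $\log(1-z)\le-z-cz^2$ fails for $z<0$, so one must restrict $|\eta^{(m)}\mathscr{L}_t^{(m)}|$ to a range where a two-sided quadratic control of $-\log(1-z)$ holds, which is precisely where the gradient bound $G^{(m)}$ and the constant $8$ would have to enter. The exact-identity route sidesteps this entirely, so I expect it to be the cleanest and I would present it as the main argument, relegating the exp-concavity property to its structural role in the subsequent aggregation bounds.
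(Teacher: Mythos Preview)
Your proposal is correct. The exp-concavity half is essentially identical to the paper's proof: both compute the second derivative of $\exp(-\eta L_t(\cdot))$, obtain the factor $1-2\eta(y-\mu_t)^2$, and close with the convex-hull bound $|y-\mu_t|\le D$.

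For the second half you take a genuinely different and cleaner route. The paper proceeds through the exp-concavity property itself, following Hazan: it applies the concavity inequality $\varphi_{\gamma}(y)\le\varphi_{\gamma}(z)+\varphi_{\gamma}'(z)(y-z)$ at $y=\hat y_t^{(m)}$, $z=\hat y_t$, rearranges to $\gamma(L_t(\hat y_t)-L_t(\hat y_t^{(m)}))\le\log(1-\gamma\mathscr{L}_t^{(m)})$, and then invokes $\log(1-z)\le-z-\tfrac14 z^2$ on the range $|z|\le\tfrac14$. That range restriction is precisely what forces $\gamma^{(m)}=\tfrac{1}{2(2G^{(m)}\vee D^2)}$ and hence $\eta^{(m)}=\gamma^{(m)}/4$. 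Your exact identity $L_t(\hat y_t)-L_t(\hat y_t^{(m)})=-\mathscr{L}_t^{(m)}-(\hat y_t-\hat y_t^{(m)})^2$, combined with $(\mathscr{L}_t^{(m)})^2\le4D^2(\hat y_t-\hat y_t^{(m)})^2$, bypasses the logarithm entirely and in fact delivers the inequality for any $\eta^{(m)}\le\tfrac{1}{4D^2}$; the gradient bound $G^{(m)}$ is not needed at this step. The paper's argument has the merit of being the generic one, valid for any exp-concave loss, while yours exploits the specific quadratic structure to give a sharper and more transparent proof in the present setting.
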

The refined linearized risk $ \mathscr{L}_t^{(m)} + \eta^{(m)}{\mathscr{L}_t^{(m)}}^2$ is called the surrogate risk. It is itself exp-concave due to the quadratic term whereas the linearized risk cannot be exp-concave.
\begin{proof}[Proof] Let $0<\eta\le\frac{1}{(2D^2)}$.
Consider the function $\varphi_\eta(y)=e^{-\eta L_t(y)}$ for  $y=\sum_{m=1}^M\pi^{(m)}\hat y^{(m)}_t$ and $(\pi^{(m)})_{1\le m\le M}$ in the simplex. The function $\varphi$ is a.s. at least twice differentiable and we have
$$
\varphi_\eta''(y)=-2\eta\varphi_\eta(y)\left(1-2\eta(y-\mu_t)^2\right).
$$
using the derivation under the integral sign.
For $\theta\in\Theta$, we get the concavity since $\varphi_\eta''(y)\le 0$ as 
$$2\eta(y-\mu_t)^2 \le 2\eta(\sum_{m=1}^M\pi^{(m)} (\hat y^{(m)}_t-\mu_t))^2\le 2\eta D^2= 1$$ and the first assertion follows.\\

We proceed as in the proof of Lemma 4.2 of \cite{hazan2016introduction} considering $\gamma^{(m)}=\frac{1}{2( 2G^{(m)}\vee D^2)}\le \eta$. One deduces from the concavity property of $\varphi_{\gamma^{(m)}}$ that $\varphi_{\gamma^{(m)}}(y)-\varphi_{\gamma^{(m)}}(z)\le \varphi_{\gamma^{(m)}}'(z)(y-z)$ which, taking $y=\hat y_t^{(m)}$ and $z=\hat y_t$, provides
\begin{multline*}
\exp(-\gamma^{(m)}L_t(\hat y_t^{(m)}))-\exp(-\gamma^{(m)}L_t(\hat y_t))\\
\le -\gamma^{(m)}L'_t(\hat y_t)\exp(-\gamma^{(m)}L_t(\hat y_t))(\hat y_t^{(m)}-\hat y_t).
\end{multline*}
One deduces   that 
$$
\gamma^{(m)}(L_t(\hat y_t)-L_t(\hat y_t^{(m)}))\le \log(1-\gamma^{(m)}L'_t(\hat y_t)(\hat y_t^{(m)}-\hat y_t)).
$$
Using the relation $\log(1-z)\le -z-\frac14 z^2$ that holds for any $|z|\le 1/4$ applied on $|\gamma^{(m)}L'_t(\hat y_t)(\hat y_t^{(m)}-\hat y_t)|\le 1/4$ one obtains 
$$
\gamma^{(m)}(L_t(\hat y_t)-L_t(\hat y_t^{(m)}))\le \gamma^{(m)}L'_t(\hat y_t)(\hat y_t-\hat y_t^{(m)})-\frac14(\gamma^{(m)}L'_t(\hat y_t)(\hat y_t-\hat y_t^{(m)}))^2
$$
and the second assertion follows.
\end{proof}

\subsection{KAO for model selection}
In this Section we assume the exp-concavity of the conditional risks and we adapt the classical analysis of the Exponentially Weighted Average (EWA) algorithm of \cite{cesa2006prediction} to our setting. The aggregation procedure, called KAO, is described in Algorithm \ref{alg:EWA}.

\begin{algorithm}[!t]
\caption{KAO for model selection}
    \label{alg:EWA}
\begin{flushleft}
{\bfseries Parameters:} The variances $\sigma^{2(m)}$, $1\le m\le M$ and the learning rate $\eta$.\\
{\bfseries Initialization:} The initial weights $\rho_1^{(m)}=\rho_0^{(m)}$, $1\le m\le M$.\\
 For each iteration $t=1,\dots,T$:\\
    {\bfseries Inputs:} The Kalman predictions $\hat y_{t+1}^{(m)}$ and the matrices  $P_{t}^{(m)}$, $1\le m\le M$. \\
    {\bfseries Recursion:}     Do:
\end{flushleft}
\begin{eqnarray*}
\rho_{t+1}^{(m)}&=&\frac{\exp\left(-\eta \left(X_{t}^\top P_{t}^{(m)}X_{t}+\sigma^{2(m)}\right)\right)\rho_{t}^{(m)}}{\sum_{m=1}^M \exp\left(-\eta\left( X_{t}^\top P_{t}^{(m)}X_{t}+\sigma^{2(m)}\right)\right)\rho_{t}^{(m)}}\\
\hat y_{t+1} &=& \sum_{m=1}^M\rho_{t+1}^{(m)}\hat y_{t+1}^{(m)}\,.
\end{eqnarray*}
\end{algorithm}

KAO achieves the optimal rate for model selection.
\begin{theorem}
Under assumption {\bf (H)} and if it exists  $D>0 $ such that 
$|\hat{y}_t^{(m)}-\mu_t| \le D$ a.s. $1\le m\le M$,  $t\ge 1$   then KAO for model selection with $\eta=\frac{1}{(2D^2)}$
achieves the regret bound
$$
R_t^S(m)\le - 2D^2  \log(\rho_0^{(m)})\qquad 1\le t\le T\,, 1\le m\le M\,.
$$
\end{theorem}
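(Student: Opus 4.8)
The plan is to observe that Algorithm~\ref{alg:EWA} is nothing but the exponentially weighted average forecaster run with the \emph{exact conditional risks} of the experts as losses, and then to run the textbook exp-concave EWA analysis using the exp-concavity already established in Theorem~\ref{thm:exp_concave}. The only place where the Kalman structure is used is the identity
$$
L_t(\hat y_t^{(m)}) = X_t^\top P_t^{(m)} X_t + \sigma^{2(m)}\,,\qquad 1\le m\le M,\ t\ge 1\,,
$$
which shows that the exponent driving the weight update $\rho_{t+1}^{(m)}$ is precisely the conditional predictive risk $L_t(\hat y_t^{(m)})$ of the $m$th expert.

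First I would prove this identity. Decomposing $y_t-\hat y_t^{(m)} = X_t^\top(\theta_t^{(m)}-\hat\theta_t^{(m)}) + \varepsilon_t^{(m)}$ and using that under \textbf{(H)} the Kalman recursion returns the exact conditional mean $\hat\theta_t^{(m)}=\E_{t-1}[\theta_t^{(m)}]$ together with the conditional error covariance $P_t^{(m)}=\var_{t-1}(\theta_t^{(m)})$, I take $\E_{t-1}[\cdot]$ of the square. Because $X_t$ is $\mathcal F_{t-1}$-measurable, the state part contributes $X_t^\top P_t^{(m)} X_t$ and the observation noise contributes $\sigma^{2(m)}$, while the cross term vanishes since $\varepsilon_t^{(m)}$ has conditional mean zero and, under \textbf{(H)}, is independent of $\theta_t^{(m)}$. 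The decisive consequence is that $L_t(\hat y_t^{(m)})$ is $\mathcal F_{t-1}$-measurable and hence available to the algorithm; this is exactly why the resulting regret bound is a.s. deterministic instead of holding only with high probability.

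Next, writing $\ell_s^{(m)}:=L_s(\hat y_s^{(m)})$, I would unfold the multiplicative update into $\rho_t^{(m)}\propto\rho_0^{(m)}\exp(-\eta\sum_{s=1}^{t-1}\ell_s^{(m)})$ and introduce the potential $\Phi_t=\log\sum_{m=1}^M\rho_0^{(m)}\exp(-\eta\sum_{s=1}^t\ell_s^{(m)})$, so that $\Phi_0=0$ since the initial weights sum to one. A one-line computation gives the increment $\Phi_t-\Phi_{t-1}=\log\sum_{m=1}^M\rho_t^{(m)}e^{-\eta\ell_t^{(m)}}$. Taking $\eta=(2D^2)^{-1}$, the first assertion of Theorem~\ref{thm:exp_concave} makes $L_t$ $\eta$-exp-concave on the convex hull of the $\hat y_t^{(m)}$, and Jensen's inequality applied to the concave map $y\mapsto e^{-\eta L_t(y)}$ at the point $\hat y_t=\sum_m\rho_t^{(m)}\hat y_t^{(m)}$ yields $e^{-\eta L_t(\hat y_t)}\ge\sum_m\rho_t^{(m)}e^{-\eta\ell_t^{(m)}}$, i.e. $L_t(\hat y_t)\le-\eta^{-1}(\Phi_t-\Phi_{t-1})$. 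Summing telescopes to $\sum_{s=1}^t L_s(\hat y_s)\le-\eta^{-1}\Phi_t$, and lower-bounding $\Phi_t$ by its single $m$th summand, $\Phi_t\ge\log\rho_0^{(m)}-\eta\sum_{s=1}^t\ell_s^{(m)}$, gives $\sum_{s=1}^t L_s(\hat y_s)-\sum_{s=1}^t L_s(\hat y_s^{(m)})\le-2D^2\log\rho_0^{(m)}$ for every $t\le T$, which is the claim.

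I expect the only genuinely new ingredient to be the exact-risk identity of the first step; everything downstream is the standard potential argument, so no serious obstacle should arise there. The single point requiring care is that Theorem~\ref{thm:exp_concave} guarantees exp-concavity precisely on the set of convex combinations $\sum_m\pi^{(m)}\hat y_t^{(m)}$, which is exactly the domain on which Jensen is invoked, so the rate $\eta=(2D^2)^{-1}$ is admissible at every step and no boundedness of the responses $y_t$ themselves is needed.
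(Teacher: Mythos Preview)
Your proposal is correct and follows essentially the same route as the paper: first establish the key identity $L_t(\hat y_t^{(m)})=X_t^\top P_t^{(m)}X_t+\sigma^{2(m)}$ (the paper does this via the intermediate lemma $\hat y_t^{(m)}=\E_{t-1}[\E[y_t\mid m]]$, you via the direct decomposition $y_t-\hat y_t^{(m)}=X_t^\top(\theta_t^{(m)}-\hat\theta_t^{(m)})+\varepsilon_t^{(m)}$), and then run the standard exp-concave EWA/potential argument with Jensen's inequality at rate $\eta=(2D^2)^{-1}$. Your potential-function presentation is a clean variant of the paper's recursive inequality $\sum_m\rho_0^{(m)}e^{-\eta R_{t-1}^S(m)}\ge\sum_m\rho_0^{(m)}e^{-\eta R_t^S(m)}$, but the two are the same argument.
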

We note that the classical EWA algorithm satisfies a similar regret bound under the stronger assumption
$$
|\hat{y}_t^{(m)}-y_t| \le D,\qquad 1\le m\le M,\, t\ge 1,\, a.s.
$$
which never holds in our Gaussian setting. One usual way to bypass this well-known restriction of EWA is to use a doubling trick which deteriorates the regret bound, see \cite{cesa2006prediction} for more details.
\begin{proof}
The proof is standard and follows the line of the proof of the EWA regret in \cite{cesa2006prediction}. The crucial step consists in identifying the conditional risk of any Kalman prediction $\hat y^{(m)}_t$ under {\bf (H)}. We have the following Lemma
\begin{lem}
Under {\bf (H)} we have the identity $\hat y_t^{(m)}=\E_{t-1}[\E[y_t\mid m]]$.
\end{lem}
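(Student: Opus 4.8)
The plan is to reduce the identity to the standard optimality property of the Kalman filter, namely that the predicted state $\hat\theta_t^{(m)}$ equals the conditional mean $\E[\theta_t^{(m)}\mid\mathcal F_{t-1}]$. First I would use the identity $\E[y_t\mid m]=X_t^\top\theta_t^{(m)}$ recalled just above together with the predictability of the design, $X_t\in\mathcal F_{t-1}$, to pull $X_t$ out of the conditional expectation:
$$
\E_{t-1}\big[\E[y_t\mid m]\big]=\E_{t-1}\big[X_t^\top\theta_t^{(m)}\big]=X_t^\top\E_{t-1}\big[\theta_t^{(m)}\big].
$$
Since $\hat y_t^{(m)}=X_t^\top\hat\theta_t^{(m)}$ by definition, the lemma is therefore equivalent to the single vector identity $\hat\theta_t^{(m)}=\E[\theta_t^{(m)}\mid\mathcal F_{t-1}]$, which is what I would actually prove.

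I would establish this by induction on $t$, tracking not merely the mean but the entire conditional law. The inductive hypothesis is that, conditionally on $\mathcal F_{t-1}$, the state $\theta_t^{(m)}$ is Gaussian with mean $\hat\theta_t^{(m)}$ and covariance $\sigma^{2(m)}P_t^{(m)}$, the scaling by $\sigma^{2(m)}$ being exactly what renders the recursion independent of this variance. For the inductive step I would run the two classical Kalman passes. First, the measurement update: under {\bf (H)} the pair $(\theta_t^{(m)},y_t)$ is, conditionally on $\mathcal F_{t-1}$, jointly Gaussian with $y_t=X_t^\top\theta_t^{(m)}+\varepsilon_t^{(m)}$ and $\varepsilon_t^{(m)}$ of conditional variance $\sigma^{2(m)}$; conditioning this Gaussian pair on the newly observed $y_t$ through the Gaussian conditioning formula yields the filtered law with gain $P_t^{(m)}X_t/(X_t^\top P_t^{(m)}X_t+1)$, the factor $\sigma^{2(m)}$ cancelling in the gain. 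Second, the prediction update: applying the linear recursion $\theta_{t+1}^{(m)}=K^{(m)}\theta_t^{(m)}+z_{t+1}^{(m)}$ with $z_{t+1}^{(m)}$ independent and Gaussian propagates the mean and covariance forward to $\mathcal F_t$. Substituting the filtered mean into this prediction reproduces exactly the recursion for $\hat\theta_{t+1}^{(m)}$ of Algorithm~\ref{alg:Kalman}, which closes the induction; the base case is handled by the initialization $\hat\theta_0^{(m)},P_0^{(m)}$.

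The step I expect to be the main obstacle is justifying the conditional joint Gaussianity of $(\theta_t^{(m)},y_t)$ given $\mathcal F_{t-1}$ and carrying the normalization correctly. Assumption {\bf (H)} supplies the joint Gaussianity of $(y_t,\theta_t^{(m)})$ and the constant conditional variance $\sigma^{2(m)}$, so the Gaussian conditioning formula applies; the delicate bookkeeping is to maintain the invariant that $\sigma^{2(m)}P_t^{(m)}$ is the genuine conditional covariance throughout, which is precisely why the denominator of the gain carries a "$+1$" rather than "$+\sigma^{2(m)}$" and why the mean recursion, hence the prediction $\hat y_t^{(m)}$, does not depend on the unknown $\sigma^{2(m)}$. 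Once this invariant is secured, the mean part of both Kalman passes is a routine linear computation and the identity $\hat\theta_t^{(m)}=\E[\theta_t^{(m)}\mid\mathcal F_{t-1}]$, and thus the stated lemma, follows.
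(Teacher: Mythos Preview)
Your proposal is correct and follows the same logical skeleton as the paper: both reduce the identity to $\hat\theta_t^{(m)}=\E_{t-1}[\theta_t^{(m)}]$ via the predictability of $X_t$ and the definition $\E[y_t\mid m]=X_t^\top\theta_t^{(m)}$. The only difference is one of depth. The paper's proof is a two-line appeal to the standard fact that in the Gaussian state-space model the Kalman recursion computes the best linear predictor, which coincides with the conditional expectation; it does not re-derive this. You instead supply a full inductive proof of the Kalman optimality, tracking the conditional law $\theta_t^{(m)}\mid\mathcal F_{t-1}\sim\mathcal N(\hat\theta_t^{(m)},\sigma^{2(m)}P_t^{(m)})$ through the measurement and prediction updates. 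This buys you a self-contained argument and, as you note, explains along the way why the mean recursion (and hence $\hat y_t^{(m)}$) is free of $\sigma^{2(m)}$; the paper simply takes that for granted.
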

\begin{proof}
The Kalman recursion produces the best linear prediction which is equal to the conditional expectation in the gaussian case. Then we have $\hat y_t^{(m)}=\E_{t-1}[X_t \theta_t^{(m)}]=\E_{t-1}[\E[y_t|m]]$ by definition.
\end{proof}
We have explicitely
\begin{eqnarray*}
L_s(\hat y^{(m)}_s)&=&\E_{s-1}[(y_s-\E_{s-1}[\E[y_s\mid m]])^2] \\
&=&\E_{s-1}[(y_s-\E[y_s\mid m])^2] +\E_{s-1}[(\E[y_s\mid m]-\E_{s-1}[\E[y_s\mid m]])^2] \\
&&+2\E_{s-1}[(y_s-\E[y_s\mid m])(\E[y_s\mid m]-\E_{s-1}[\E[y_s\mid m]])] \\
&=&\E_{s-1}[(y_s-X_t \theta_t^{(m)})^2] +\E_{s-1}[(X_t( \theta_t^{(m)}-\hat \theta_t^{(m)})^2] \\
&=&\sigma^{2(m)}+X_s^\top P_s^{(m)}X_s\,, 
\end{eqnarray*}
since the third term of the sum is zero and since $\E_{s-1}[(X_t( \theta_t^{(m)}-\hat \theta_t^{(m)})^2]=X_s^\top P_s^{(m)}X_s$ thanks to the Kalman recursion properties in the gaussian case. We also have, using the exp-concavity of $L_t$ and Jensen inequality,
\begin{eqnarray*}
e^{-\eta L_t(\hat{y}_t)}&=&e^{-\eta L_t(\sum_{m=1}^M\rho_{t}^{(m)}\hat{y}_t^{(m)})}\\
&\geq& \sum_{m=1}^M\rho_{t}^{(m)}e^{-\eta L_t (\hat{y}_t^{(m)})}, \text{ }\\
&\ge&\frac{\sum_{m=1}^M \rho_{0}^{(m)}e^{-\eta \sum_{s=1}^{t-1}L_s(\hat y_s)} e^{-\eta L_t (\hat{y}_t^{(m)})}}{\sum_{m=1}^M \rho_{0}^{(m)}e^{-\eta\sum_{s=1}^{t-1}L_s(\hat y_s)}}\\
&\ge&\frac{\sum_{m=1}^M \rho_{0}^{(m)}e^{-\eta R_{t-1}^{S}(m)}e^{-\eta L_t (\hat{y}_t^{(m)})}}{\sum_{m=1}^M \rho_{0}^{(m)}e^{-\eta R_{t-1}^{S}(m)}}\,,  
\end{eqnarray*}
multiplying by $e^{\eta\sum_{s=1}^{t-1}L_s(\hat y_s^{(m)})}$ above and below the fraction. We get the recursive relation
$$
1=\sum_{m=1}^M \rho_{0}^{(m)}\ge \sum_{m=1}^M \rho_{0}^{(m)} e^{-\eta R_{t-1}^{S}(m)}\ge \sum_{m=1}^M \rho_{0}^{(m)}e^{-\eta R_{t}^{S}(m)}\,
$$
and the desired result follows.
\end{proof}

\subsection{KAO for aggregation}

In the case where the best expert is not worthy of confidence, it is generally much more interesting to compete with the best convex combination of the experts at hand. In this context, the aim is to provide a bound on the regret for aggregation $R_t^A(\pi)$ where $\pi:=(\pi^{(m)})_{1\le m\le M}$ belongs to the simplex. As the conditional risk  $L_s$  is a convex function that is differentiable, one applies the gradient trick and we consider an explicit biased version of the linearized risk 
\begin{multline}\label{eq:centeredpseudoloss}
\mathscr{L}_t^{(m)}=X_{t}^\top P_{t}^{(m)}X_{t}+\sigma^{2(m)}- (\hat{y}_{t}-\hat{y}_{t}^{(m)})^2\\-\sum_{m'=1}^M\rho_t^{(m')} \left(X_{t}^\top P_{t}^{(m')}X_{t}+\sigma^{2(m')}- (\hat{y}_{t}-\hat{y}_{t}^{(m')})^2\right).
\end{multline}
By convention $\mathscr{L}_0^{(m)}=0$.
In our setting, the adaptation of the gradient-based EWA of \cite{cesa2006prediction} yields Algorithm \ref{alg:gt}.
\begin{algorithm}[!t]
\caption{KAO for aggregation}
    \label{alg:gt}
\begin{flushleft}
{\bfseries Parameters:} The variances $\sigma^{2(m)}$, $1\le m\le M$ and the learning rate $\eta$.\\
{\bfseries Initialization:} The initial weights $\rho_0^{(m)}$, $1\le m\le M$.\\
 For each iteration $t=0,\dots,T$:\\
    {\bfseries Inputs:} The Kalman predictions $\hat y_{t+1}^{(m)}$ and the matrices  $P_{t}^{(m)}$, $1\le m\le M$. \\
    {\bfseries Recursion:}     Do:
\end{flushleft}
\begin{eqnarray*}
\mathscr{L} _t^{(m)} &=& \eqref{eq:centeredpseudoloss}\\
\rho_{t+1}^{(m)}&=&\frac{\exp\left(-\eta\mathscr{L}_t^{(m)}\right)\rho_{t}^{(m)}}{\sum_{m'=1}^M \exp\left(-\eta \mathscr{L}_t^{(m)}\right)\rho_{t}^{(m')}}\\
\hat y_{t+1} &=& \sum_{m=1}^M\rho_{t+1}^{(m)}\hat y_{t+1}^{(m)}\,.
\end{eqnarray*}
\end{algorithm}
The following theorem derives an upper bound for the regret $R_t^A$.
\begin{theorem}
Under Assumption {\bf (H)}, suppose it exists $G>0$ such that 
$|\mathscr{L}_t^{(m)}|\le G$  a.s. for $1\le t\le T$, $1\le m\le M$.
Then KAO for aggregation starting with $\rho_0^{(m)}=1/M$ and $\eta=\frac{1}{G}\sqrt{\frac{2\log M}{t}}$ satisfies the regret bound
\begin{equation}\label{eq:regretbound2}
R_t^A\le G \sqrt{2t\log M}\,, \qquad 1\le t\le T.
\end{equation}
\end{theorem}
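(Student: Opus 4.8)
The plan is to run the textbook exponentially weighted average argument on the \emph{linearized} losses, after first checking that the explicit, observable surrogate $\mathscr{L}_t^{(m)}$ defined in \eqref{eq:centeredpseudoloss} is exactly the gradient-trick pseudo-loss appearing in \eqref{eq:gradtrick}. This identification is the one genuinely model-specific step; everything afterwards is classical.

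First I would establish that, under \textbf{(H)}, one has $\mathscr{L}_t^{(m)} = L'_t(\hat y_t)(\hat y_t^{(m)}-\hat y_t) = 2(\hat y_t-\mu_t)(\hat y_t^{(m)}-\hat y_t)$ with $\mu_t=\E_{t-1}[y_t]$. The subtle point is that $\mathscr{L}_t^{(m)}$ is written purely in observables ($X_t^\top P_t^{(m)}X_t$, $\sigma^{2(m)}$, $\hat y_t$, $\hat y_t^{(m)}$), whereas the right-hand side involves the unobservable best prediction $\mu_t$; the two coincide only thanks to the Kalman variance identity $L_t(\hat y_t^{(m)})=\sigma^{2(m)}+X_t^\top P_t^{(m)}X_t$ proved for the model-selection theorem, together with the split $L_t(\hat y)=(\hat y-\mu_t)^2+\var_{t-1}(y_t)$. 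Writing $\ell_t^{(m)}=X_t^\top P_t^{(m)}X_t+\sigma^{2(m)}-(\hat y_t-\hat y_t^{(m)})^2$ and using $\hat y_t=\sum_m\rho_t^{(m)}\hat y_t^{(m)}$, a short computation gives $\sum_m\rho_t^{(m)}\ell_t^{(m)}=L_t(\hat y_t)$, so $\mathscr{L}_t^{(m)}=\ell_t^{(m)}-\sum_{m'}\rho_t^{(m')}\ell_t^{(m')}$ is the centered linearization. Two consequences I would record: the weighted losses vanish, $\sum_m\rho_t^{(m)}\mathscr{L}_t^{(m)}=0$, and since the centering constant is common to all $m$ it cancels in the normalization, so the update of Algorithm~\ref{alg:gt} is plain EWA, $\rho_{t+1}^{(m)}\propto\rho_0^{(m)}\exp(-\eta\sum_{s\le t}\mathscr{L}_s^{(m)})$.

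Next I would invoke the gradient trick \eqref{eq:gradtrick}: summing over $s\le t$,
\[
R_t^A(\pi)=\sum_{s=1}^t\Big(L_s(\hat y_s)-L_s\big(\textstyle\sum_m\pi^{(m)}\hat y_s^{(m)}\big)\Big)\le-\sum_{s=1}^t\sum_m\pi^{(m)}\mathscr{L}_s^{(m)},
\]
so it suffices to control the cumulative linear loss of any fixed $\pi$ in the simplex. For this I would use the potential $W_t=\sum_m\rho_0^{(m)}\exp(-\eta\sum_{s\le t}\mathscr{L}_s^{(m)})$, $W_0=1$. The one-step ratio is $W_s/W_{s-1}=\sum_m\rho_s^{(m)}e^{-\eta\mathscr{L}_s^{(m)}}$, and Hoeffding's lemma applied to $m\mapsto\mathscr{L}_s^{(m)}\in[-G,G]$ (range $2G$), together with $\sum_m\rho_s^{(m)}\mathscr{L}_s^{(m)}=0$, gives $\log(W_s/W_{s-1})\le\eta^2(2G)^2/8=\eta^2G^2/2$. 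Telescoping yields $\log W_t\le t\eta^2G^2/2$, while $W_t\ge\rho_0^{(m)}e^{-\eta\sum_{s\le t}\mathscr{L}_s^{(m)}}=M^{-1}e^{-\eta\sum_{s\le t}\mathscr{L}_s^{(m)}}$ supplies the matching lower bound for each $m$. Combining and taking the $\pi$-convex combination over $m$,
\[
-\sum_{s=1}^t\sum_m\pi^{(m)}\mathscr{L}_s^{(m)}\le\frac{\log M}{\eta}+\frac{\eta tG^2}{2},
\]
uniformly in $\pi$.

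Finally I would optimize the free rate: $\log M/\eta+\eta tG^2/2$ is minimized at $\eta=G^{-1}\sqrt{2\log M/t}$, where it equals $G\sqrt{2t\log M}$, which is exactly \eqref{eq:regretbound2}. I expect the main obstacle to be the verification in the first step that the observable surrogate \eqref{eq:centeredpseudoloss} reproduces the linearized risk $L'_t(\hat y_t)(\hat y_t^{(m)}-\hat y_t)$, since this is where \textbf{(H)} and the Kalman identity enter and where the unobservable $\mu_t$ must disappear; once that is in hand, the rest is the classical Hoeffding/EWA computation. A secondary point worth a remark is that the stated $\eta$ depends on the evaluation time $t$, so the clean statement is really for a fixed horizon (tuning $\eta$ to $T$); obtaining it simultaneously for all $t\le T$ would require a time-varying rate $\eta_t$ and the standard self-confident refinement, affecting only constants.
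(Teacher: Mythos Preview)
Your proposal is correct and follows essentially the same route as the paper: identify the observable surrogate \eqref{eq:centeredpseudoloss} with the linearized risk $L_t'(\hat y_t)(\hat y_t^{(m)}-\hat y_t)$ via the Kalman identity $L_t(\hat y_t^{(m)})=\sigma^{2(m)}+X_t^\top P_t^{(m)}X_t$, apply the gradient trick, then run the standard Hoeffding/EWA potential argument and optimize $\eta$. Your write-up is in fact slightly more explicit than the paper's in recording the centering $\sum_m\rho_t^{(m)}\mathscr{L}_t^{(m)}=0$ and in flagging the horizon-dependence of $\eta$, but the substance is identical.
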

The regret bound matches the optimal bound for $M\ge \sqrt t$. Note that the boundedness assumption on $\mathscr{L}_t^{(m)} $ involves only the predictions and does not require to bound $(y_t)$. 
\begin{proof}
Since $L_s$ is convex and differentiable, we apply the gradient trick
$$
R_t^A(\pi)\le - \sum_{s=1}^t  \sum_{m=1}^M \pi^{(m)}\mathscr{L}_s^{(m)}.
$$
Moreover, the expression of $L_s'(\hat{y}_s)\hat{y}_s^{(m)}$ can be developed as
\begin{eqnarray*}
L_s'(\hat{y}_s)\hat{y}_s^{(m)}&=&2\E_{s-1}[(\hat{y}_s-y_s)\hat{y}_s^{(m)}]\\
&=&2\hat{y}_s\hat{y}_s^{(m)} + \E_{s-1}[(y_s-\hat{y}_s^{(m)})^2] - \hat{y}_s^{(m)^2} - \E_{s-1}[y_s^2]\\
&=&\E_{s-1}[(y_s-\hat{y}_s^{(m)})^2] - (\hat{y}_s-\hat{y}_s^{(m)})^2 + \hat{y}_s^2 - \E_{s-1}[y_s^2]\\
&=&X_s^{ \top} P_s^{(m)}X_s +\sigma^{2(m)} - (\hat{y}_s-\hat{y}_s^{(m)})^2 + \hat{y}_s^2 - \E_{s-1}[y_s^2].
\end{eqnarray*} 
Since the two last summands of $L_s'(\hat{y}_s)\hat{y}_s^{(m)}$ do not depend on $m$ we obtain the identity $\mathscr{L}_t^{(m)}=L_s'(\hat{y}_s)(\hat{y}_s^{(m)}-\hat y_s)=\eqref{eq:centeredpseudoloss}$. As it exists $G>0$ satisfying $|\mathscr{L}_t^{(m)}|\le G$, by using the Hoeffding lemma (i.e. $\log\E[e^{\alpha X}]\le \frac{\alpha^2}{2}G^2$, for any centered random variable $|X|\le G$, with $\alpha\in\mathbb{R}$), and the identity 
$$
\rho_s^{(m)}=\dfrac{\exp(-\eta \sum_{r=1}^{s-1}\mathscr{L}_t^{(m)})\rho_0^{(m)}}{\sum_{m'= 1}^M \exp(-\eta \sum_{r=1}^{s-1}\mathscr{L}_t^{(m')})\rho_0^{(m')}}
$$ we get
$$
\log\left(\sum_{m=1}^M\frac{\exp\left(-\eta\sum_{r=1}^{s}\mathscr{L}_r^{(m)}\right)\rho_0^{(m)}}{\sum_{m'=1}^M \exp\left(-\eta\sum_{r=1}^{s-1}\mathscr{L}_r^{(m')}\right)\rho_0^{(m')}}\right) \le  \frac{\eta^2}{2}G^2\,.
$$
Then, by summing over $s$, a telescoping sum appears and leads to
$$
\dfrac1\eta \log\left( \sum_{m=1}^M\exp\left(-\eta\sum_{s=1}^t \mathscr{L}_s^{(m)} \right)\rho_0^{(m)}\right)
 \le  \eta t\frac{G^2}{2}.
$$
Moreover, 
$$
\exp\left(-\eta\sum_{s=1}^t  \mathscr{L}_s^{(m)}\right)\rho_0^{(m)}\le \sum_{m=1}^M\exp\left(-\eta\sum_{s=1}^t \mathscr{L}_s^{(m)}\right)\rho_0^{(m)},
$$
which leads to
$$
\sum_{m=1}^M\pi^{(m)}\left(\sum_{s=1}^t- \mathscr{L}_s^{(m)}+\frac{\log \rho_0^{(m)}}{\eta}\right)\le \frac{1}{\eta}\log\left(\sum_{m=1}^M\exp\left(-\eta\sum_{s=1}^t \mathscr{L}_s^{(m)} \right)\rho_0^{(m)}\right).
$$
Combining those bounds, we obtain
$$
R_t^A(\pi)\le  \sum_{s=1}^t  \sum_{m=1}^M - \pi^{(m)}\mathscr{L}_s^{(m)}\le  \sum_{m=1}^M\pi^{(m)}\left(\frac{-\log \rho_0^{(m)}}{\eta}+ \eta t\frac{G^2}{2}\right).
$$
We get the desired result noticing that $\rho_0^{(m)}= 1/M$  
and the optimal choice of $\eta=\frac{1}{G}\sqrt{\frac{2\log M}{t}}$.
\end{proof}
We notice that a unique learning rate yields a uniform regret bound, independent of $\pi$. We also notice that we can use the gradient trick despite we only observed a biased version of the linearized risk thanks to the exponential form of the weights that are not sensitive to the  bias.

\section{Online tuning of the learning rates}
The theoretical guarantees on the regret for the model selection and the aggregation problems do not hold for the same algorithm. The gradient trick is a crucial step in the proof of the regret for the aggregation problem. However, the fast rate for the model selection does not hold for the gradient-based EWA since the linearized risk cannot be exp-concave. In order to bypass this issue, we adapt the approach of \cite{wintenberger2017optimal} to our setting. The first step is to use a surrogate loss of the form $\mathscr{L}_t^{(m)}\left(1+\eta \mathscr{L}_t^{(m)}\right)$ where the quadratic part yields exp-concavity. The second step is to use a multiple learning rates version of KAO as described in Algorithm \ref{alg:ml} in Section \ref{sec:ml} where we show that multiple learning rates can be easily tuned online.

\begin{algorithm}[!t]
\caption{KAO with multiple learning rates}
    \label{alg:ml}
\begin{flushleft}
{\bfseries Parameters:} The variances $\sigma^{2(m)}$, the weights $\tilde \rho_{0}^{(m)}$ and the learning rates $\eta^{(m)}$, $1\le m\le M$.\\
{\bfseries Initialization:} The initial weights $\rho_{0}^{(m)}=\eta^{(m)}\tilde \rho_{0}^{(m)}/(\sum_{m'=1}^M\eta^{(m')}\tilde \rho_{0}^{(m')})$, $1\le m\le M$.\\
 For each iteration $t=1,\dots,T$:\\
    {\bfseries Inputs:} The Kalman predictions $\hat y_{t+1}^{(m)}$ and the matrices  $P_{t}^{(m)}$, $1\le m\le M$. \\
    {\bfseries Recursion:}     Do:
\end{flushleft}
\begin{eqnarray*}
\mathscr{L}_t^{(m)} &=& \eqref{eq:centeredpseudoloss}\\
\rho_{t+1}^{(m)}&=&\frac{\exp\left(-\eta^{(m)} \mathscr{L}_t^{(m)}(1+\eta^{(m)}\mathscr{L}_t^{(m)}) \right)\rho_{t}^{(m)}}{\sum_{m'=1}^M \exp\left(-\eta^{(m)} \mathscr{L}_t^{(m')}(1+\eta^{(m')}\mathscr{L}_t^{(m')}) \right)\rho_{t}^{(m')}}\\
\hat y_{t+1} &=& \sum_{m=1}^M\rho_{t+1}^{(m)}\hat y_{t+1}^{(m)}\,.
\end{eqnarray*}
\end{algorithm}

\subsection{Multiple learning rates for KAO}\label{sec:ml}
In the context of expert aggregation, it is well known that using multiple learning rates help to increase the prediction accuracy, see \cite{gaillard2014second} and \cite{wintenberger2017optimal}. Here we aim to provide a multiple learning rates version for KAO in a similar way than the multiple learning rate version of the BOA procedure (see \cite{wintenberger2017optimal}).  The following theorem provides regret bounds both for model selection and aggregation on the same algorithm.

\begin{theorem}
Under assumption {\bf (H)} suppose it exists $G>0$ such that 
$|\mathscr{L}_t^{(m)}|\le G$  a.s. for $1\le t\le T$, $1\le m\le M$. Then the aggregation regret of KAO with multiple learning rates $\eta^{(m)}=\frac{1}{G}\left(\sqrt{-\frac{\log \tilde\rho^{(m)}_0}{t}}\wedge \frac12\right)$ is bounded as
\begin{equation}\label{eq:regretbound2}
R_t^A(\pi)\leq 2G\sum_{m=1}^M \pi^{(m)}\left(\sqrt{ - \log \tilde \rho_0^{(m)}t}- \log \tilde \rho_0^{(m)}\right)\,,\qquad 1\le t\le T\,.
\end{equation}
If moreover there exists  $D>0 $ such that 
$|\hat{y}_t^{(m)}-\mu_t| \le D$ a.s. $1\le m\le M$,  $t\ge 1$  then the model selection regret of KAO  with multiple learning rates $\eta^{(m)} = \frac{1}{8( 2G\vee D^2)} $ for all $1\leq m\leq M$  is bounded as
\begin{equation} 
R_t^S(m) \leq   - 8( 2G\vee D^2)\log \tilde \rho^{(m)} \,.
\end{equation}
\end{theorem}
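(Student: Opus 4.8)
The plan is to deduce both regret bounds from one and the same non-increasing potential built on the cumulative surrogate loss. For each $1\le m\le M$ I set $S_t^{(m)}=\sum_{s=1}^t\mathscr{L}_s^{(m)}\big(1+\eta^{(m)}\mathscr{L}_s^{(m)}\big)$ and introduce
\[
W_t=\sum_{m=1}^M\tilde\rho_0^{(m)}\exp\!\big(-\eta^{(m)}S_t^{(m)}\big),\qquad W_0=\sum_{m=1}^M\tilde\rho_0^{(m)}=1
\]
(the prior $\tilde\rho_0$ being a probability vector). Unrolling the recursion of Algorithm \ref{alg:ml} (and using $\mathscr{L}_0^{(m)}=0$), the prediction weights read $\rho_t^{(m)}\propto\eta^{(m)}\tilde\rho_0^{(m)}\exp(-\eta^{(m)}S_{t-1}^{(m)})$; this proportionality, and in particular the factor $\eta^{(m)}$, is precisely the reason for the initialisation $\rho_0^{(m)}\propto\eta^{(m)}\tilde\rho_0^{(m)}$. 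For the aggregation regret I would start from the gradient trick \eqref{eq:gradtrick}, which gives $R_t^A(\pi)\le-\sum_{s=1}^t\sum_{m=1}^M\pi^{(m)}\mathscr{L}_s^{(m)}$, whereas for the model selection regret I would instead feed in the refined exp-concavity inequality of Theorem \ref{thm:exp_concave}, $L_t(\hat y_t)-L_t(\hat y_t^{(m)})\le-\mathscr{L}_t^{(m)}-\eta^{(m)}(\mathscr{L}_t^{(m)})^2$, which sums to $R_t^S(m)\le-S_t^{(m)}$.

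The main obstacle, and the crux of the whole argument, is to show that $W_t$ is non-increasing. First I record the elementary Bernstein-type inequality $e^{-x-x^2}\le 1-x$, valid for $|x|\le\tfrac12$, and apply it with $x=\eta^{(m)}\mathscr{L}_t^{(m)}$; this is legitimate because $\eta^{(m)}\le\tfrac1{2G}$ in both settings (the cap $\wedge\tfrac12$ in the aggregation rate, and $\eta^{(m)}=\tfrac1{8(2G\vee D^2)}\le\tfrac1{16G}$ in the model selection rate), so that $|\eta^{(m)}\mathscr{L}_t^{(m)}|\le\tfrac12$ under $|\mathscr{L}_t^{(m)}|\le G$. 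Writing $w_{t-1}^{(m)}=\tilde\rho_0^{(m)}\exp(-\eta^{(m)}S_{t-1}^{(m)})$, the increment is
\[
W_t-W_{t-1}=\sum_{m=1}^M w_{t-1}^{(m)}\Big(e^{-\eta^{(m)}\mathscr{L}_t^{(m)}(1+\eta^{(m)}\mathscr{L}_t^{(m)})}-1\Big)\le-\sum_{m=1}^M\eta^{(m)}w_{t-1}^{(m)}\mathscr{L}_t^{(m)} .
\]
The decisive observation is that $\eta^{(m)}w_{t-1}^{(m)}$ is proportional to the prediction weight $\rho_t^{(m)}$, so that the right-hand side equals $-\big(\sum_{m'}\eta^{(m')}w_{t-1}^{(m')}\big)\sum_{m}\rho_t^{(m)}\mathscr{L}_t^{(m)}$, and this vanishes because the linearised risk \eqref{eq:centeredpseudoloss} is centred with respect to $\rho_t$, i.e. $\sum_m\rho_t^{(m)}\mathscr{L}_t^{(m)}=0$. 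Hence $W_t\le W_{t-1}\le\cdots\le W_0=1$. It is this cancellation of the first-order term, made possible by the $\eta^{(m)}$-weighting of the prediction weights, that makes the multiple learning rates analysis go through.

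Retaining only the $m$-th summand in $W_t\le1$ yields $\tilde\rho_0^{(m)}\exp(-\eta^{(m)}S_t^{(m)})\le1$, that is $S_t^{(m)}\ge(\eta^{(m)})^{-1}\log\tilde\rho_0^{(m)}$ for every $m$ and $t$. For aggregation I would expand $S_t^{(m)}=\sum_s\mathscr{L}_s^{(m)}+\eta^{(m)}\sum_s(\mathscr{L}_s^{(m)})^2$ and use $\sum_s(\mathscr{L}_s^{(m)})^2\le tG^2$ to obtain $-\sum_s\mathscr{L}_s^{(m)}\le-(\eta^{(m)})^{-1}\log\tilde\rho_0^{(m)}+\eta^{(m)}tG^2$. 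Substituting $\eta^{(m)}=\tfrac1G\big(\sqrt{-\log\tilde\rho_0^{(m)}/t}\wedge\tfrac12\big)$ and treating the two regimes of the minimum separately, a short computation gives $-(\eta^{(m)})^{-1}\log\tilde\rho_0^{(m)}+\eta^{(m)}tG^2\le 2G\big(\sqrt{-t\log\tilde\rho_0^{(m)}}-\log\tilde\rho_0^{(m)}\big)$; combining with the gradient trick and summing against $\pi$ produces the stated aggregation bound. For model selection the uniform rate $\eta^{(m)}=\eta=\tfrac1{8(2G\vee D^2)}$ makes the same potential inequality read $-S_t^{(m)}\le-\eta^{-1}\log\tilde\rho_0^{(m)}=-8(2G\vee D^2)\log\tilde\rho_0^{(m)}$, and since $R_t^S(m)\le-S_t^{(m)}$ the claimed bound follows. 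The only laborious part is the two-case arithmetic for the aggregation rate; the sole genuine difficulty is the potential monotonicity established above.
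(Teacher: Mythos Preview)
Your argument is correct and follows essentially the same route as the paper's proof. Both hinge on showing that the potential $W_t=\sum_m\tilde\rho_0^{(m)}\exp(-\eta^{(m)}S_t^{(m)})$ is non-increasing by exploiting the centering $\sum_m\rho_t^{(m)}\mathscr{L}_t^{(m)}=0$ (equivalently, $\sum_m\tilde\rho_t^{(m)}\eta^{(m)}\mathscr{L}_t^{(m)}=0$), then extracting the per-expert bound $-S_t^{(m)}\le-(\eta^{(m)})^{-1}\log\tilde\rho_0^{(m)}$ and finishing with the gradient trick for $R_t^A$ and Theorem~\ref{thm:exp_concave} for $R_t^S$.

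The only technical difference is the device used for the potential step: the paper invokes a Jensen-type inequality $\E[\exp(X-X^2)]\le 1$ for centred $X\ge -1/2$ (relying on concavity of $x\mapsto e^{x-x^2}$), whereas you use the pointwise inequality $e^{-x-x^2}\le 1-x$ for $|x|\le\tfrac12$ and then sum against $w_{t-1}^{(m)}$. Your version is slightly more elementary and makes the cancellation of the first-order term completely explicit, but the two are interchangeable. Your two-case computation for the choice of $\eta^{(m)}$ is also correct; the paper absorbs it in the same way.
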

\begin{proof}
We start by applying the gradient trick as in \eqref{eq:gradtrick} inferring that
$$
R_t^A(\pi)\le  -\sum_{s=1}^t  \sum_{m=1}^M\pi^{(m)}\mathscr{L}_s^{(m)}.
$$
Moreover, as $x-x^2$ is $1$-exp-concave for $x>1/2$, Jensen's inequality implies that
\begin{equation}\label{eq:expoineq}
\E\left[\exp\left(X-X^2\right)\right]\le \exp\left(\E[X]-\E[X]^2\right)=1
\end{equation}
for any centered random variable $X$ such that $X\ge -1/2$ a.s. We notice that $\eta^{(m)}=\frac{1}{G}\sqrt{-\frac{\log \tilde\rho^{m}_0}{t}}\wedge \frac12$ satisfies the relation
$$
\eta^{(m)}\mathscr{L}_t^{(m)}\le \frac{1}{2}\,.
$$
Denoting
$$
\tilde{\rho}_t^{(m)}=\frac{\exp\left[-\sum_{s=1}^{t-1}\eta^{(m)}\mathscr{L}_s^{(m)} \left(1+\eta^{(m)}\mathscr{L}_s^{(m)}\right)\right]\tilde{\rho}_0^{(m)}}{\sum_{m'=1}^M \exp\left[-\sum_{s=1}^{t-1}\eta^{(m')}\mathscr{L}_s^{(m')} \left(1+\eta^{(m')}\mathscr{L}_s^{(m')}\right)\right]\tilde{\rho}_0^{(m')}},
$$
we have the identity
$$
\tilde{\rho}_t^{(m)}=\frac{\rho_t^{(m)}}{\eta^{(m)}}\times\frac{1}{\sum_{m'=1}^M \rho_t^{(m')}/\eta^{(m')}},
$$
which leads to
$$
\sum_{m=1}^M\tilde{\rho}_t^{(m)}\eta^{(m)}\mathscr{L}_t^{(m)}=0.
$$
Thus the random variable $\left(\eta^{(m)}\mathscr{L}_t^{(m)}\right)_{1\le m\le M}$ is therefore centered for the distribution $\left(\tilde{\rho}_t^{(m)}\right)_{1\le m\le M}$, and we have
\begin{equation}\label{eq:hypothesis1}
\sum_{m=1}^M\tilde{\rho}_t^{(m)}\exp\left[ -\eta^{(m)}\mathscr{L}_t^{(m)} \left(1+\eta^{(m)}\mathscr{L}_t^{(m)}\right)\right]\leq 1,
\end{equation}
since $-\eta^{(m)}\mathscr{L}_t^{(m)}>-1/2$ a.s. for all $1\le m\le M$ and $1\le t\le T$, using~(\ref{eq:expoineq}).
By putting the expression of $\tilde{\rho}_t^{(m)}$ into Equation~(\ref{eq:hypothesis1}), we have
\begin{multline}
\sum_{m=1}^M\tilde{\rho}_0^{(m)}\exp\left[ -\sum_{s=1}^t \eta^{(m)}\mathscr{L}_s^{(m)} \left(1+\eta^{(m)}\mathscr{L}_s^{(m)}\right)\right] \le \\ \sum_{m=1}^M\tilde{\rho}_0^{(m)}\exp\left[ -\sum_{s=1}^{t-1} \eta^{(m)}\mathscr{L}_s^{(m)} \left(1+\eta^{(m)}\mathscr{L}_s^{(m)}\right)\right],\nonumber
\end{multline}
which implies for $1\le t\le T$ that
\begin{equation}\nonumber
\sum_{m=1}^M\tilde{\rho}_0^{(m)}\exp\left[ -\sum_{s=1}^t \eta^{(m)}\mathscr{L}_s^{(m)} \left(1+\eta^{(m)}\mathscr{L}_s^{(m)}\right)\right] \le 1,
\end{equation}
since  by convention that $\mathscr{L}(\hat{y}_0^{(m)})=0$. Thus, for $1\le m\le M$, 
\begin{equation}\label{eq:exploss}
-\sum_{s=1}^t \eta^{(m)}\mathscr{L}_s^{(m)} \left(1+\eta^{(m)}\mathscr{L}_s^{(m)}\right) \le -\log \tilde{\rho}_0^{(m)},
\end{equation}
by applying the logarithm function using the previous inequality. We have
$$
-\sum_{m=1}^M\tilde \pi^{(m)}\sum_{s=1}^t \eta^{(m)}\mathscr{L}_s^{(m)}   \le \sum_{m=1}^M\tilde \pi^{(m)}\left( {\eta^{(m)}}^2\sum_{s=1}^t{\mathscr{L}_s^{(m)}}^2  -\log \tilde{\rho}_0^{(m)}\right)\,
$$
for 
$$
\tilde \pi^{(m)}=\dfrac{\pi^{(m)}/\eta^{(m)}}{\sum_{m'=1}^M\pi^{(m')}/\eta^{(m')}}\,,\qquad 1\le m\le M\,.
$$
Multiplying with $\sum_{m'=1}^M\pi^{(m')}/\eta^{(m')}$ we obtain
\begin{align*}
-\sum_{m=1}^M \pi^{(m)}\sum_{s=1}^t \mathscr{L}_s^{(m)}   &\le \sum_{m=1}^M  \pi^{(m)}\left( \eta^{(m)}\sum_{s=1}^t{\mathscr{L}_s^{(m)}}^2 - \dfrac{ \log \tilde{\rho}_0^{(m)}}{\eta^{(m)}}\right)\\
&\le \sum_{m=1}^M  \pi^{(m)}\left( \eta^{(m)} G^2 t - \dfrac{ \log \tilde{\rho}_0^{(m)}}{\eta^{(m)}}\right)
\end{align*}
and the desired result on $R_t^A(\pi)$ follows from the specific choice of $\eta^{(m)}$.

The regret bound on $R_t^S(m)$ follows by an application of Theorem \ref{thm:exp_concave} on the last bound specified for $\pi$ in the canonical basis
$$
\sum_{s=1}^t- \mathscr{L}_s^{(m)}- \eta^{(m)}{\mathscr{L}_s^{(m)}}^2   \le  \pi^{(m)}\left(  - \dfrac{ \log \tilde{\rho}_0^{(m)}}{\eta^{(m)}}\right)
$$
for all $1\le m\le M$ as $\eta^{(m)}= \frac{1}{8( 2G\vee D^2)} $.
\end{proof}
However the regrets bound do not apply on KAO with the same learning rates. The slow rate aggregation regret bound  holds for a $O(1/\sqrt t)$ learning rate whereas the fast rate  model selection regret bound  holds for a constant learning rate.

\subsection{Adaptive multiple learning rates}\label{sec:mlada}
Multiple learning rates are easily adaptable as in Algorithm  \ref{alg:mlada}. Moreover, a single algorithm with unique adaptive learning rates achieves optimal regret bounds for both model selection and aggregation problems as for the BOA algorithm developed by \cite{wintenberger2017optimal} and refined by \cite{gaillard2018efficient}.
\begin{algorithm}[!t]
\caption{KAO with adaptive multiple learning rates}
    \label{alg:mlada}
\begin{flushleft}
{\bfseries Parameters:} The variances $\sigma^{2(m)}$, $1\le m\le M$.\\
{\bfseries Initialization:} Any initial weights $\tilde \rho_{0}^{(m)}>0$ such that $\sum_{m=1}^M\tilde \rho_0^{(m)}=1$, $1\le m\le M$.\\
 For each iteration $t=1,\dots,T$:\\
    {\bfseries Inputs:} The Kalman predictions $\hat y_{t+1}^{(m)}$ and the matrices  $P_{t}^{(m)}$, $1\le m\le M$. \\
    {\bfseries Recursion:}     Do:
\end{flushleft}
\begin{eqnarray*}
\mathscr{L}_t^{(m)}&=&\eqref{eq:centeredpseudoloss}\\
 \eta_t^{(m)}&=&\sqrt{\frac{-\log \tilde \rho_0^{(m)}}{1+\sum_{s=1}^t {\mathscr{L}_s^{(m)}}^2}}\\
\rho_{t+1}^{(m)}&=&\frac{\eta_{t}^{(m)}\exp\left[-\eta_{t}^{(m)}\sum_{s=1}^{t}\mathscr{L}_s^{(m)} \left(1+\eta_{s-1}^{(m)}\mathscr{L}_s^{(m)}\right)\right]\tilde{\rho}_0^{(m)}}{\sum_{m'=1}^M \eta_{t}^{(m')}\exp\left[-\eta_{t}^{(m')}\sum_{s=1}^{t}\mathscr{L}_s^{(m')} \left(1+\eta_{s-1}^{(m')}\mathscr{L}_s^{(m')}\right)\right]\tilde{\rho}_0^{(m')}}\\
\hat y_{t+1} &=& \sum_{m=1}^M\rho_{t+1}^{(m)}\hat y_{t+1}^{(m)}\,.
\end{eqnarray*}
\end{algorithm}

\begin{theorem}
Under assumption {\bf (H)} suppose there exist $G^{(m)}>0$ and $D>0$ such that 
$|\mathscr{L}_t^{(m)}|\le G^{(m)}$ and $|\hat y_t^{(m)}-\mu_t|\le D$ a.s.  for $1\le t\le T$, $1\le m\le M$. Then the regret of KAO with adaptive multiple learning rates such that $ \eta_{t-1}^{(m)} \mathscr{L}_t^{(m)}<1/2$ for any $1\le t\le T$ and $1\le m \le M$ is bounded as
\begin{align*}
R_t^A(\pi)\leq & \sum_{m=1}^M \pi^{(m)}\left(G^{(m)}(3+G^{(m)})\sqrt t+1\right)\left( \sqrt{-\log \tilde \rho_0^{(m)}}+r_t^{(m)}\right)\,,\\
R_t^S(m)\le&\, 8(2 G^{(m)}\vee D^2) (3+G^{(m)}) \left(\sqrt{-\log \tilde \rho_0^{(m)}}+r_t^{(m)}\right)^2\\
&\qquad\qquad\qquad\qquad\qquad\qquad\qquad\qquad+ \sqrt{-\log \tilde \rho_0^{(m)}}+r_t^{(m)} \,.
\end{align*}
where $r_t^{(m)}=\frac{\log\log\left( e^{1/4} +G^{(m)} \sqrt{t+1}\right)}{\sqrt{-\log \tilde \rho_0^{(m)}}}$.
\end{theorem}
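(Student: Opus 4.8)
The plan is to follow the adaptive Bernstein Online Aggregation analysis of \cite{wintenberger2017optimal} and its refinement in \cite{gaillard2018efficient}, transposed to the centered pseudo-loss $\mathscr{L}_t^{(m)}$ of \eqref{eq:centeredpseudoloss}. First I would apply the gradient trick exactly as in \eqref{eq:gradtrick} to reduce the aggregation regret to $R_t^A(\pi)\le -\sum_{s=1}^t\sum_{m=1}^M\pi^{(m)}\mathscr{L}_s^{(m)}$, so that the whole argument concentrates on lower-bounding the cumulative pseudo-losses $\sum_{s=1}^t\mathscr{L}_s^{(m)}$ for each fixed $m$. The structural fact I would exploit throughout is that \eqref{eq:centeredpseudoloss} makes $\mathscr{L}_t^{(m)}$ centered for the current weights, $\sum_{m=1}^M\rho_t^{(m)}\mathscr{L}_t^{(m)}=0$; this is the analogue of the centering identity used in the non-adaptive proof and it is what allows the second-order exp-concavity inequality \eqref{eq:expoineq} to be brought to bear.

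Second, I would introduce the potential $W_t=\sum_{m=1}^M\tilde\rho_0^{(m)}\eta_t^{(m)}\exp(-\eta_t^{(m)}S_t^{(m)})$ with $S_t^{(m)}=\sum_{s=1}^t\mathscr{L}_s^{(m)}(1+\eta_{s-1}^{(m)}\mathscr{L}_s^{(m)})$, so that $W_t$ is precisely the normalising constant of the weights in Algorithm \ref{alg:mlada}. Under the standing constraint $\eta_{t-1}^{(m)}\mathscr{L}_t^{(m)}<1/2$, applying \eqref{eq:expoineq} to the centered family $(\eta_{t-1}^{(m)}\mathscr{L}_t^{(m)})_{1\le m\le M}$ contracts the exponential factor by a factor at most one per step whenever the learning rate is frozen. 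The place where the argument genuinely departs from the non-adaptive theorem is that $\eta_t^{(m)}$ now varies with $t$, so the telescoping of $W_t$ is no longer exact and a residual error is created at every update of the rate.

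Third, the heart of the proof is to control this accumulated error. Since $\eta_t^{(m)}$ is non-increasing in $t$ — its denominator $1+\sum_{s\le t}(\mathscr{L}_s^{(m)})^2$ is non-decreasing — and the prefactor $\eta_t^{(m)}$ appearing in the weights is chosen precisely to offset the variation, a summation-by-parts estimate bounds the total discrepancy by a quantity of order $\log(\eta_0^{(m)}/\eta_t^{(m)})$, hence of order $\log(1+\sum_{s=1}^t(\mathscr{L}_s^{(m)})^2)$; inserting $\sum_{s=1}^t(\mathscr{L}_s^{(m)})^2\le (G^{(m)})^2 t$ and taking a further logarithm yields the correction $r_t^{(m)}=\log\log(e^{1/4}+G^{(m)}\sqrt{t+1})/\sqrt{-\log\tilde\rho_0^{(m)}}$. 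I expect this estimate, essentially the adaptive-rate lemma of \cite{gaillard2018efficient}, to be the main obstacle, as it requires carefully pairing the variation of $\eta_t^{(m)}$ with the second-order term while verifying the constraint $\eta_{t-1}^{(m)}\mathscr{L}_t^{(m)}<1/2$ along the way.

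Finally, combining the per-step contraction with this error control gives, for each $m$, a bound of the form $-\sum_{s=1}^t\mathscr{L}_s^{(m)}\le \eta_t^{(m)}\sum_{s=1}^t(\mathscr{L}_s^{(m)})^2+(-\log\tilde\rho_0^{(m)})/\eta_t^{(m)}+(\text{correction})$; substituting the explicit $\eta_t^{(m)}$ together with $|\mathscr{L}_s^{(m)}|\le G^{(m)}$ and $\sum_{s=1}^t(\mathscr{L}_s^{(m)})^2\le (G^{(m)})^2 t$, then weighting by $\pi^{(m)}$, produces the stated bound on $R_t^A(\pi)$. For the model selection regret I would specialise $\pi$ to a canonical basis vector and invoke Theorem \ref{thm:exp_concave}, whose quadratic penalty $-\eta^{(m)}(\mathscr{L}_t^{(m)})^2$ exactly absorbs the second-order term: with the constant-order rate governed by $8(2G^{(m)}\vee D^2)$ this converts the $\sqrt t$ aggregation rate into the fast $(\sqrt{-\log\tilde\rho_0^{(m)}}+r_t^{(m)})^2$ model selection rate, completing the proof.
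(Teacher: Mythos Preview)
Your overall strategy matches the paper's: gradient trick, centering of $(\eta_{t-1}^{(m)}\mathscr{L}_t^{(m)})_m$ under the weights, the exp-concavity inequality \eqref{eq:expoineq}, a $\log(\eta_0^{(m)}/\eta_{t-1}^{(m)})$ correction for the varying rate, and Theorem~\ref{thm:exp_concave} for model selection. Two technical points, however, are not right as written and would prevent you from reaching the stated bounds.

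\textbf{The second-order term.} The surrogate loss in Algorithm~\ref{alg:mlada} pairs $\eta_{s-1}^{(m)}$ with $(\mathscr{L}_s^{(m)})^2$ at each step, so after isolating $-\sum_s\mathscr{L}_s^{(m)}$ the residual is $\sum_{s=1}^t\eta_{s-1}^{(m)}(\mathscr{L}_s^{(m)})^2$, \emph{not} $\eta_t^{(m)}\sum_{s=1}^t(\mathscr{L}_s^{(m)})^2$ as you wrote. Since $\eta_{s-1}^{(m)}\ge\eta_t^{(m)}$, your expression is only a lower bound and ``substituting the explicit $\eta_t^{(m)}$'' does not give an upper bound. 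The paper controls this by a separate telescoping: setting $V_s=1+\sum_{r\le s}(\mathscr{L}_r^{(m)})^2$ one has $\eta_{s-1}^{(m)}(\mathscr{L}_s^{(m)})^2=\sqrt{-\log\tilde\rho_0^{(m)}}\,(V_s-V_{s-1})/\sqrt{V_{s-1}}\le\sqrt{-\log\tilde\rho_0^{(m)}}\,(\sqrt{1+(G^{(m)})^2}+1)(\sqrt{V_s}-\sqrt{V_{s-1}})$, and summing yields $(2+G^{(m)})\sqrt{-\log\tilde\rho_0^{(m)}\sum_s(\mathscr{L}_s^{(m)})^2}$. This is where the factor $(3+G^{(m)})$ in the final bound originates; your route would miss it.

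\textbf{The rate-variation step.} Your ``summation by parts'' is vague; the paper's concrete device is the inequality $x\le x^\alpha+(\alpha-1)/\alpha$ for $x\ge0$, $\alpha\ge1$, applied with $\alpha=\eta_{t-2}^{(m)}/\eta_{t-1}^{(m)}$, which converts the exponent $\eta_{t-1}^{(m)}$ to $\eta_{t-2}^{(m)}$ at the additive cost $(\eta_{t-2}^{(m)}-\eta_{t-1}^{(m)})/\eta_{t-2}^{(m)}$. Summing these and bounding by $\int dx/x$ gives $\log(\eta_0^{(m)}/\eta_{t-1}^{(m)})$, as you anticipated.

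\textbf{Model selection.} You correctly specialise $\pi$ and invoke Theorem~\ref{thm:exp_concave}, but one intermediate step is missing: before the quadratic term can be absorbed, the paper applies Young's inequality $2\sqrt{ab}\le\gamma a+b/\gamma$ with $\gamma=4(2G^{(m)}\vee D^2)$ to split the $\sqrt{\sum_s(\mathscr{L}_s^{(m)})^2}$ factor into $\tfrac{1}{8(2G^{(m)}\vee D^2)}\sum_s(\mathscr{L}_s^{(m)})^2$ plus the squared constant term; it is this first piece that Theorem~\ref{thm:exp_concave} cancels.
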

\begin{remark}\label{rem:dt}
The leading constant is proportional to ${G^{(m)}}^2$. It is not optimal and can be reduced to ${G^{(m)}}$ by refining the adaptive learning rates as in \cite{cesa2007improved}.
\end{remark} 
\begin{proof}
By adapting the inequality \eqref{eq:hypothesis1} as $(\eta_{t-1}^{(m)}\mathscr{L}_ t^{(m)})_{1\le m\le M}$ is centered for $\left(\tilde{\rho}_t^{(m)}\right)_{1\le m\le M}$, where
$$
\tilde{\rho}_t^{(m)}=\frac{\exp\left[-\eta_{t-1}^{(m)}\sum_{s=1}^{t-1}\mathscr{L}_s^{(m)} \left(1+\eta_{s-1}^{(m)}\mathscr{L}_s^{(m)}\right)\right]\tilde{\rho}_0^{(m)}}{\sum_{m'=1}^M \exp\left[-\eta_{t-1}^{(m')}\sum_{s=1}^{t-1}\mathscr{L}_s^{(m')} \left(1+\eta_{s-1}^{(m')}\mathscr{L}_s^{(m')}\right)\right]\tilde{\rho}_0^{(m')}},
$$
for any $t\ge 2 $ we have
\begin{multline}\label{eq:ineq1}
\sum_{m=1}^M \tilde{\rho}_0^{(m)}\exp\left[-\eta_{t-1}^{(m)}\sum_{s=1}^{t}\mathscr{L}_s^{(m)} \left(1+\eta_{s-1}^{(m)}\mathscr{L}_s^{(m)}\right)\right]\le\\
\sum_{m=1}^M \tilde{\rho}_0^{(m)}\exp\left[-\eta_{t-1}^{(m)}\sum_{s=1}^{t-1}\mathscr{L}_s^{(m)} \left(1+\eta_{s-1}^{(m)}\mathscr{L}_s^{(m)}\right)\right].
\end{multline}
Since $x\le x^\alpha+\alpha^{-1}(\alpha-1)$ for  $x\ge 0$ and $\alpha\ge1$, by setting  
$$
\alpha=\frac{\eta_{t-2}^{(m)}}{\eta_{t-1}^{(m)}} \, \text{ and }\, x=\exp\left[-\eta_{t-1}^{(m)}\sum_{s=1}^{t-1}\mathscr{L}_s^{(m)} \left(1+\eta_{s-1}^{(m)}\mathscr{L}_s^{(m)}\right)\right],
$$
we have for any $t\ge 2$ the relation
\begin{multline*}
\exp\left[-\eta_{t-1}^{(m)}\sum_{s=1}^{t-1}\mathscr{L}_s^{(m)} \left(1+\eta_{s-1}^{(m)}\mathscr{L}_s^{(m)}\right)\right]\le\\
\exp\left[-\eta_{t-2}^{(m)}\sum_{s=1}^{t-1}\mathscr{L}_s^{(m)} \left(1+\eta_{s-1}^{(m)}\mathscr{L}_s^{(m)}\right)\right] + \frac{\eta_{t-2}^{(m)}-\eta_{t-1}^{(m)}}{\eta_{t-2}^{(m)}},
\end{multline*}
which leads to
\begin{multline}\label{eq:ineq2}
\sum_{m=1}^M \tilde{\rho}_0^{(m)}\exp\left[-\eta_{t-1}^{(m)}\sum_{s=1}^{t}\mathscr{L}_s^{(m)} \left(1+\eta_{s-1}^{(m)}\mathscr{L}_s^{(m)}\right)\right]\le\\
\sum_{m=1}^M \tilde{\rho}_0^{(m)}\exp\left[-\eta_{t-2}^{(m)}\sum_{s=1}^{t-1}\mathscr{L}_s^{(m)} \left(1+\eta_{s-1}^{(m)}\mathscr{L}_s^{(m)}\right)\right]+\sum_{m=1}^M\tilde{\rho}_0^{(m)} \frac{\eta_{t-2}^{(m)}-\eta_{t-1}^{(m)}}{\eta_{t-2}^{(m)}}.
\end{multline}
Using a recursion argument on $t\ge 2$ on  Equation~(\ref{eq:ineq2})  yields
\begin{multline}\label{eq:ineq3}
\sum_{m=1}^M \tilde{\rho}_0^{(m)}\exp\left[-\eta_{t-1}^{(m)}\sum_{s=1}^{t}\mathscr{L}_s^{(m)} \left(1+\eta_{s-1}^{(m)}\mathscr{L}_s^{(m)}\right)\right]\\
\le \sum_{m=1}^M \tilde{\rho}_0^{(m)}\exp\left[-\eta_{0}^{(m)}\mathscr{L}_1^{(m)} \left(1+\eta_{0}^{(m)}\mathscr{L}_1^{(m)}\right)\right]+\sum_{s=1}^{t-1} \sum_{m=1}^M\tilde{\rho}_0^{(m)}\frac{\eta_{s-1}^{(m)}-\eta_{s}^{(m)}}{\eta_{s-1}^{(m)}}.
\end{multline}
Moreover, we have 
$$
\sum_{s=1}^{t-1}\frac{\eta_{s-1}^{(m)}-\eta_{s}^{(m)}}{\eta_{s-1}^{(m)}}\le\sum_{s=1}^{t-1}\int_{\eta_{s}^{(m)}}^{\eta_{s-1}^{(m)}}\frac{dx}{x}\le\int_{\eta_{t-1}^{(m)}}^{\eta_{0}^{(m)}}\frac{dx}{x}\le\log\left(\frac{\eta_{0}^{(m)}}{\eta_{t-1}^{(m)}}\right)
$$
the estimate of the ratio
$$
\frac{\eta_{0}^{(m)}}{\eta_{t-1}^{(m)}}=\sqrt{1+ \sum_{s=1}^{t-1}{\mathscr{L}_s^{(m)}}^2} \le G^{(m)} \sqrt{t+1},\, \text{ for } G^{(m)}\ge 1.
$$
and 
$$
-\eta_{0}^{(m)}\mathscr{L}_1^{(m)} \left(1+\eta_{0}^{(m)}\mathscr{L}_1^{(m)}\right)\le 1/4\,.
$$
Equation~(\ref{eq:ineq3}) implies
$$
-\eta_{t-1}^{(m)}\sum_{s=1}^{t}\mathscr{L}_s^{(m)} \left(1+\eta_{s-1}^{(m)}\mathscr{L}_s^{(m)}\right)\le -\log \tilde \rho_0^{(m)}
+r_t^{(m)},
$$
and we obtain similarly than above that for any $\pi$ we have 
$$
-\sum_{m=1}^M\pi^{(m)} \sum_{s=1}^{t}\mathscr{L}_s^{(m)} 
\le\sum_{m=1}^M\pi^{(m)}\left( \sum_{s=1}^{t} \eta_{s-1}^{(m)}{\mathscr{L}_s^{(m)}}^2+ \frac{-\log \tilde \rho_0^{(m)}+r_t^{(m)}}{\eta_{t-1}^{(m)}}\right)\,.
$$
In order to bound the second order term $ \sum_{s=1}^{t} \eta_{s-1}^{(m)}{\mathscr{L}_s^{(m)}}^2$ we denote $V_t=1+\sum_{s=1}^{t} {\mathscr{L}_s^{(m)}}^2$ so that
\begin{align*}
 \eta_{s-1}^{(m)}{\mathscr{L}_s^{(m)}}^2&=\sqrt{-\log \tilde \rho_0^{(m)}}\dfrac{V_s-V_{s-1}}{\sqrt{V_{s-1}}}\\
 &=\sqrt{-\log \tilde \rho_0^{(m)}}\dfrac{\sqrt V_s + \sqrt V_{s-1}}{\sqrt{V_{s-1}}}(\sqrt V_s - \sqrt V_{s-1})\\
 &=\sqrt{-\log \tilde \rho_0^{(m)}}\Big(\sqrt{V_s/V_{s-1}} +1\Big)(\sqrt V_s - \sqrt V_{s-1})\\
  &\le \sqrt{-\log \tilde \rho_0^{(m)}}\Big(\sqrt{1+{G^{(m)}}^2} +1\Big)(\sqrt V_s - \sqrt V_{s-1})\,.
\end{align*}
A telescoping sum argument yields
\begin{align*}
 \sum_{s=1}^{t} \eta_{s-1}^{(m)}{\mathscr{L}_s^{(m)}}^2&\le (2+G^{(m)}) \sqrt{-\log \tilde \rho_0^{(m)}}\left(\sqrt{1+\sum_{s=1}^{t} {\mathscr{L}_s^{(m)}}^2}-1\right)\\
&\le  (2+G^{(m)}) \sqrt{-\log \tilde \rho_0^{(m)}\sum_{s=1}^{t} {\mathscr{L}_s^{(m)}}^2}\,.
\end{align*}
Finally we get
\begin{align*}
-\sum_{m=1}^M\pi^{(m)} \sum_{s=1}^{t}\mathscr{L}_s^{(m)} 
\le&\sum_{m=1}^M\pi^{(m)}\left( \sqrt{\sum_{s=1}^{t} {\mathscr{L}_s^{(m)}}^2}\left( (3+G^{(m)}) \sqrt{-\log \tilde \rho_0^{(m)}}\right.\right.\\
&\left.\left.+r_t^{(m)}\right) + \sqrt{-\log \tilde \rho_0^{(m)}}+r_t^{(m)}\right)\,
\end{align*}
and the desired bound on $R_t^A$ follows.

In order to obtain the regret bound on $R_t^S$ we use the Young inequality $2\sqrt{ab} \le \gamma a+b/\gamma $ with $\gamma=4(2 G^{(m)}\vee D^2)$ so that 
\begin{align*}
-\sum_{m=1}^M\pi^{(m)} \sum_{s=1}^{t}\mathscr{L}_s^{(m)} 
\le&\sum_{m=1}^M\pi^{(m)}\left( \dfrac1{8(2 G^{(m)}\vee D^2)}\sum_{s=1}^{t} {\mathscr{L}_s^{(m)}}^2\right. \\
& +8(2 G^{(m)}\vee D^2) (3+G^{(m)}) \left(\sqrt{-\log \tilde \rho_0^{(m)}}+r_t^{(m)}\right)^2\\
&\left. + \sqrt{-\log \tilde \rho_0^{(m)}}+r_t^{(m)}\right)
\end{align*}
and the desired result follows from an application of Theorem \ref{thm:exp_concave}.
\end{proof}

\section{Discussion and examples}
KAO algorithms require the knowledge of the variances $\sigma^{2(m)}>0$.
 A natural estimator of this  quantity is the mean square residuals 
$$
\widehat\sigma^{2(m)}_t=\frac1t \sum_{s=1}^t \big(y_s-\hat y_s^{(m)}\big)^2\,.
$$
It can be tuned online but without any guarantee on the regret of the corresponding algorithm. In our applications, we prefer to estimate $\widehat\sigma^{2(m)}_t$ on a burn-in period and use this fixed value in KAO.
\subsection{Comparison with BOA}
We start this Section with a short comparison with the BOA algorithm of \cite{wintenberger2017optimal} that achieves similar regret bounds than the one obtained here. Theorem 4.2 in \cite{wintenberger2017optimal} shows that BOA, which is an algorithm based on surrogate losses, has nice generalization properties that extend the regret bounds in the adversarial setting into similar regret bounds in the stochastic adversarial setting. The price to pay for the generalization is a factor $2$ in the regret bounds. We show that this factor $2$ is avoidable under assumption {\bf (H)} with an algorithm such as KAO which uses the surrogate risks rather than the surrogate losses. Finally, notice that the use of the risk allows getting a.s. regret bounds in the well-specified stochastic unbounded setting rather than high-probability regret bounds only in bounded settings.

\subsection{The static iid setting}
In the iid setting, we consider an aggregation of static Kalman recursions with $P_0^{(m)}=1/\lambda^{(m)} I$, $\lambda^{(m)}>0$ which coincides with online ridge regression starting at $\hat \theta_0^{(m)}$. A natural estimator for $\sigma^2$ is the mean of the mean square residuals $M^{-1}\sum_{m=1}^M \hat \sigma_t^{(m)}$.  This setting is very specific since $\mu_t=\E_{t-1}[y_t]= \E[y_t]= X_t^\top \theta_t^{(m)}= X_t^\top \theta^{(m)}= X_t^\top \theta^\ast$ for any $1\le m\le M$ under {\bf (H)} and some fixed $\theta^\ast\in \Theta$ corresponds to the well specified setting.  Consider for a moment $D=\max_{t\ge 1}\max_{1\le m\le M}|X_t^\top (\theta^* -\hat \theta_t^{(m)})|$ as random.   Moreover one can estimate $G^{(m)}=4D^2$ such that, applying KAO with adaptive multiple learning rates we obtain the model selection regret bound
$$
\sum_{s=1}^t L_s(\hat y_s)\le \min_{1\le m\le M}\Big(\sum_{s=1}^t L_s(\hat y_s^{(m)})+O(- D^2\log \tilde \rho_0^{(m)}))\Big)\,.
$$
It is interesting to combine this bound with the regret bounds on the ridge regression when the design $(X_t)$ is iid, bounded by $X$ and such that $\E[X_tX_t^\top]$ has a positive lowest eigenvalue $\Lambda_{min}$. Applying Theorem 14 of \cite{de2020stochastic}, the $m$th Kalman recursion achieves for any $\theta\in\R^d$ and any $t\ge 1$
$$
\sum_{s=1}^t  L_s(\hat y_s^{(m)}) \le \sum_{s=1}^tL_s(X_s^\top \theta)+O\Big({\lambda^{(m)}}^3  \|\theta-\hat \theta_0^{(m)}\|_2^6+d \log\Big(\frac t{\lambda^{(m)}}\Big) +\log(\delta^{-1})^3 \Big)\,,
$$
with probability at least $1-\delta$. Moreover, the localization strategy of \cite{de2020stochastic} shows that   under the same probability $D$ can be considered as a constant.
Then KAO achieves the regret bound in expectation, valid for any $\theta\in \R^d$ and any $1\le m\le M$,
\begin{multline*}
\sum_{s=1}^t  L_s(\hat y_s) \le \sum_{s=1}^t L_s(X_s^\top \theta) +O\Big({\lambda^{(m)}}^3  \|\theta-\hat \theta_0^{(m)}\|_2^6+d \log\Big(\frac t{\lambda^{(m)}}\Big)  \\
- D^2\log \tilde \rho_0^{(m)} +\log(\delta^{-1})^3 \Big) \,,
\end{multline*}
with probability $1-M\delta$.
Aggregation can be seen as an online alternative of cross-validation for tuning the starting point of the ridge regression algorithm and the regularization parameter.  \\

As an illustration one should consider $\hat \theta_0^{(m)}$ may be taken equal to $\alpha (e_i)_{1\le i\le d}$ where $(e_i)_{1\le i\le d}$ is the canonical basis and $\alpha$ takes value on $[-d,d]\cap \Z$. Moreover $\lambda^{(m)}$ should be taken on an exponential $d$ finite grid of $(0,\infty)$. The number of Kalman recursions is $M=O(d)$  and choosing uniform weights yields to a regret for any $\lambda>0$ on the grid, any $1\le i\le d$ and any $\alpha \in [-d,d]\cap \Z$ as
\begin{multline*}
\sum_{s=1}^t L_s(\hat y_s)\le \sum_{s=1}^t L_s(X_s^\top \theta)+O\Big(\lambda^3  \|\theta-\alpha e_i\|_2^6+d \log\Big(\frac t{\lambda }\Big)  \\
+ D^2\log  d+\log(\delta^{-1})^3 \Big) \,,
\end{multline*}
with probability $1-d\delta$.
Other aggregation strategies on least-squares estimators are described in \cite{leung2006information}. Restrictions of our framework are the well-specification condition {\bf (H)} and the presence of the large constant $D^2$ in the model selection bound. One clear advantage is an explicit online procedure whereas least square estimators require the inversion of inverse matrices at each batch step.

\subsection{The dynamic setting}

In the dynamic setting, we consider that $(y_t)$ behaves as a centered random walk conditionally on the design. The Kalman recursions track the trajectory of the linear coefficients associated to the explanatory variables. Assume that the design is standardized such that $\E[{X_{t}^{(m)}}^2]= \E[{X_{t}^{(m')}}^2]$ for any $1\le m,m'  \le d$. Consider $M=d$ univariate Kalman recursions $d_m=1$ with $K^{(m)}=Q^{(m)}=1$. Then the random coefficients  $\theta^{(m)}_t$ satisfies the relation \eqref{eq:rw} and constitutes a random walk. If there exist $D,X>0$ satisfying $
D= \max_{t\ge 1}|\hat{y}_t-\mu_t|$ and $|X_{t,m}|\le X$ then one can bound, with high probability 
\begin{align*}
\max_{1\le t\le T}\max_{1\le m\le d}|\mathscr{L}_t^{(m)}|&\le 2DX \max_{1\le t\le T}\sum_{m=1}^d|\hat \theta^{(m)}_t|\\
&\le CDX\left(\sum_{ t=1}^T\left(\sum_{m=1}^d\E[(\hat \theta_t^{(m)})^2]^{1/2}\right)^{2}\right)^{1/2}
\end{align*}
for some high constant $C>0$. Then we can apply the result of \cite{guo1994stability} asserting that 
$\E[(\hat \theta_t^{(m)}-\theta_t^{(m)})^2]^{1/2}\le E$ for some $E>0$. Together with the fact that $\var( \theta_t^{(m)})=t$ by definition we obtain
$$
\max_{1\le t\le T}\max_{1\le m\le d}|\mathscr{L}_t^{(m)}|\le CDXd T\,.
$$
Then applying KAO with adaptive learning rate and doubling trick as in Remark \ref{rem:dt} with $G=CDXd T$,  we obtain with high probability the aggregation regret bound
$$
\sum_{s=1}^T L_s(\hat y_s)\le \sum_{s=1}^T  L_s\left(\sum_{i=1}^M\pi^{(m)}\hat y_s^{(m)}\right) + O(DX T^{3/2} d\log d) \,.
$$
This super-linear rate is due to the high fluctuations of the Kalman recursions when they track random walks $(\theta_t^{(m)})$. The Kalman recursions inherit the high variability of the random walks which is responsible for the high variability of the gradient and large $G=O(T)$. However, due to the unboundedness of the response, none of the existing regret bounds seem to apply in this setting.

\subsection{The expert aggregation setting}

The setting is similar to the previous one as $K^{(m)}$ is a diagonal matrix with non-null coefficients equals to $1$. Thus one has to assume the boundedness of the gradients to get a $\sqrt T$ regret for the aggregation problem. It is the usual assumption in the setting of aggregation of experts and then the regret is essentially divided by a factor $2$ compared with the regret bound obtained for BOA in \cite{wintenberger2017optimal} under the boundedness of the response. 
 It is worth mentioning again that the boundedness of the gradients of the conditional risk does not imply the boundedness of the response.  
\section{Simulation study}
\begin{figure}[!h] 
\centering
\subfigure{ \label{fig:kalman1}
\includegraphics[height=0.30\textheight,trim=0 0 0 0,clip]{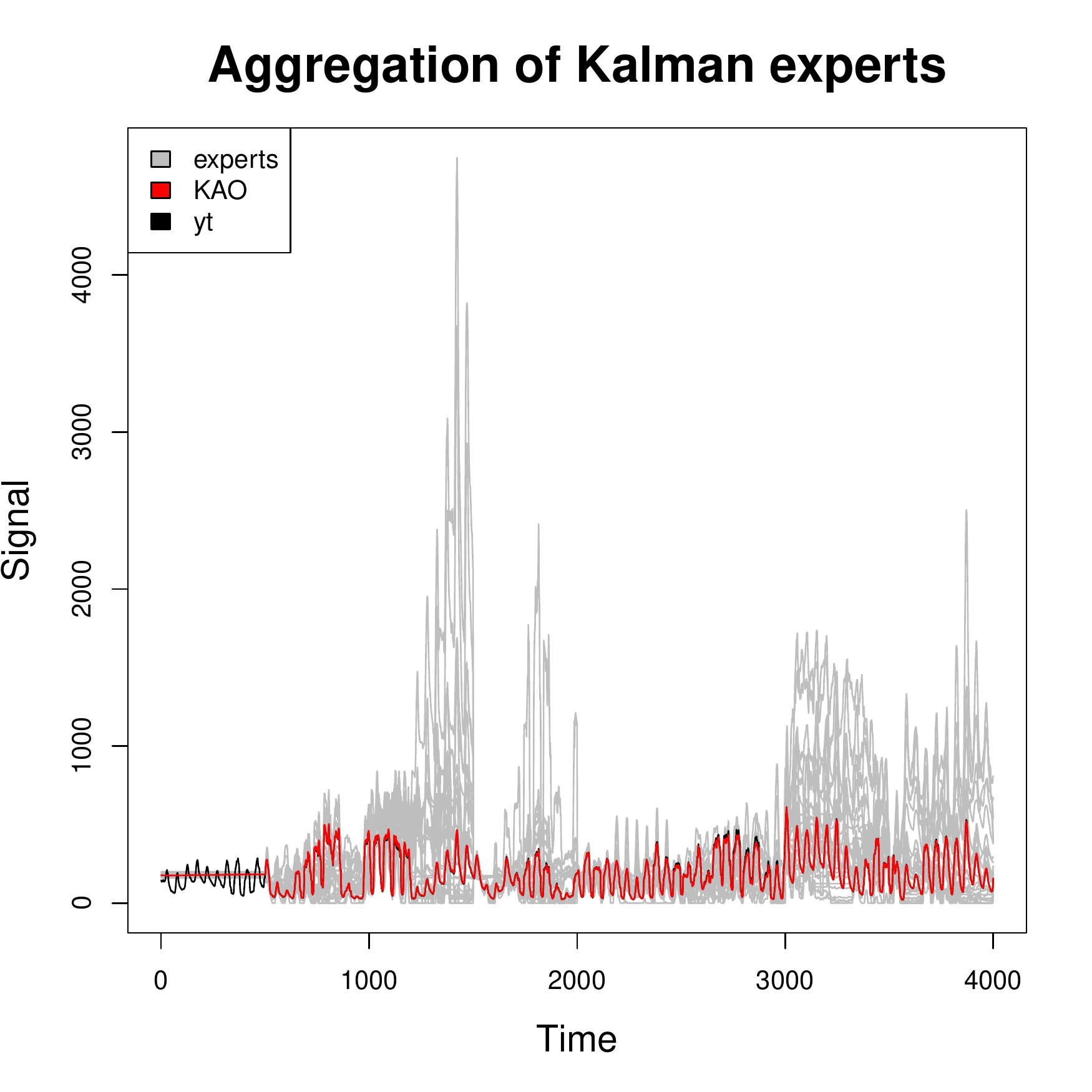} }
\quad 
\subfigure{\label{fig:cumulError1}
\includegraphics[height=0.30\textheight,trim=0 0 0 0,clip]{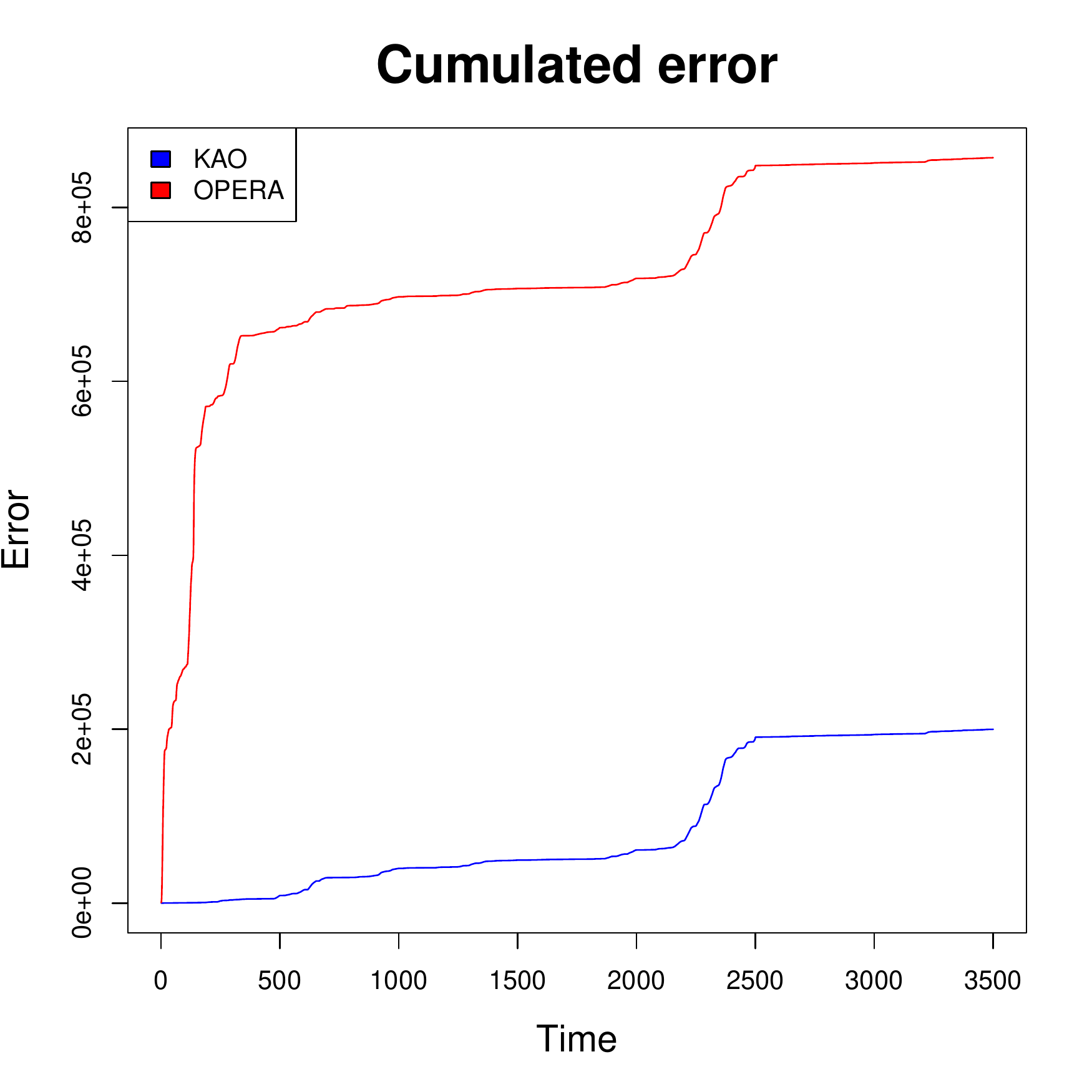} }
\caption{\textbf{One hour ahead prediction of $y_t$ using KAO, and cumulated prediction errors for KAO and OPERA}. The left panel shows one hour ahead prediction of $y_t$ using KAO and $\eta$ within a grid of values. The value of $\eta$ that minimizes the MSE is utilized to perform the prediction. These predictions are done in the case where the oracle is the best expert. The right panel shows the cumulated prediction errors for KAO and OPERA using the Kalman experts. the blue line represents the Kalman aggregation error, and the red one represents the error of the aggregation coming from the opera package. These predictions are done in the case where the oracle is the best expert\label{fig:kalman1_error1}.} 
\end{figure}
In this simulation study, we use some of the variables contained in the downloadable data set on the website of the RTE company (french TSO) that describes the hourly electricity consumption and production per type of production units in France from 2013 to 2017. We chose to simulate synthetic data from these true ones to be closer to a real application but controlling the true model at the same time. We generate synthetic data from a subset of these variables: the temperature, the gas production, the fuel production, the charcoal production, and the nebulosity. The square of the temperature and the cubic of the gas are jointly utilized as predictors in $X_t$ to simulate, under a state-space model, the signal $y_t$ that represents the electricity consumption. All the covariates are normalized to be in  $[0,1]$ by dividing each of them by their maximum value. The true model (that generates the true or the best expert) is a state-space model using the square of the temperature and the cubic of the gas as covariates in $X_t$ and Gaussian noise. Regarding the parameters of this state-space model, $\sigma=1.5$, $Q$ is of values $1$ on the diagonal and $0.9$ otherwise, $\theta_0$ is generated according to a gaussian law of mean 500 and covariance matrix identity, and $K$ is the identity matrix. We also compute 27 other Kalman experts using other combinations of covariates that are different from those used for getting the true (or best) expert.

\begin{figure}[!h] 
\centering
\subfigure{ \label{fig:kalman1}
\includegraphics[height=0.30\textheight,trim=0 0 0 0,clip]{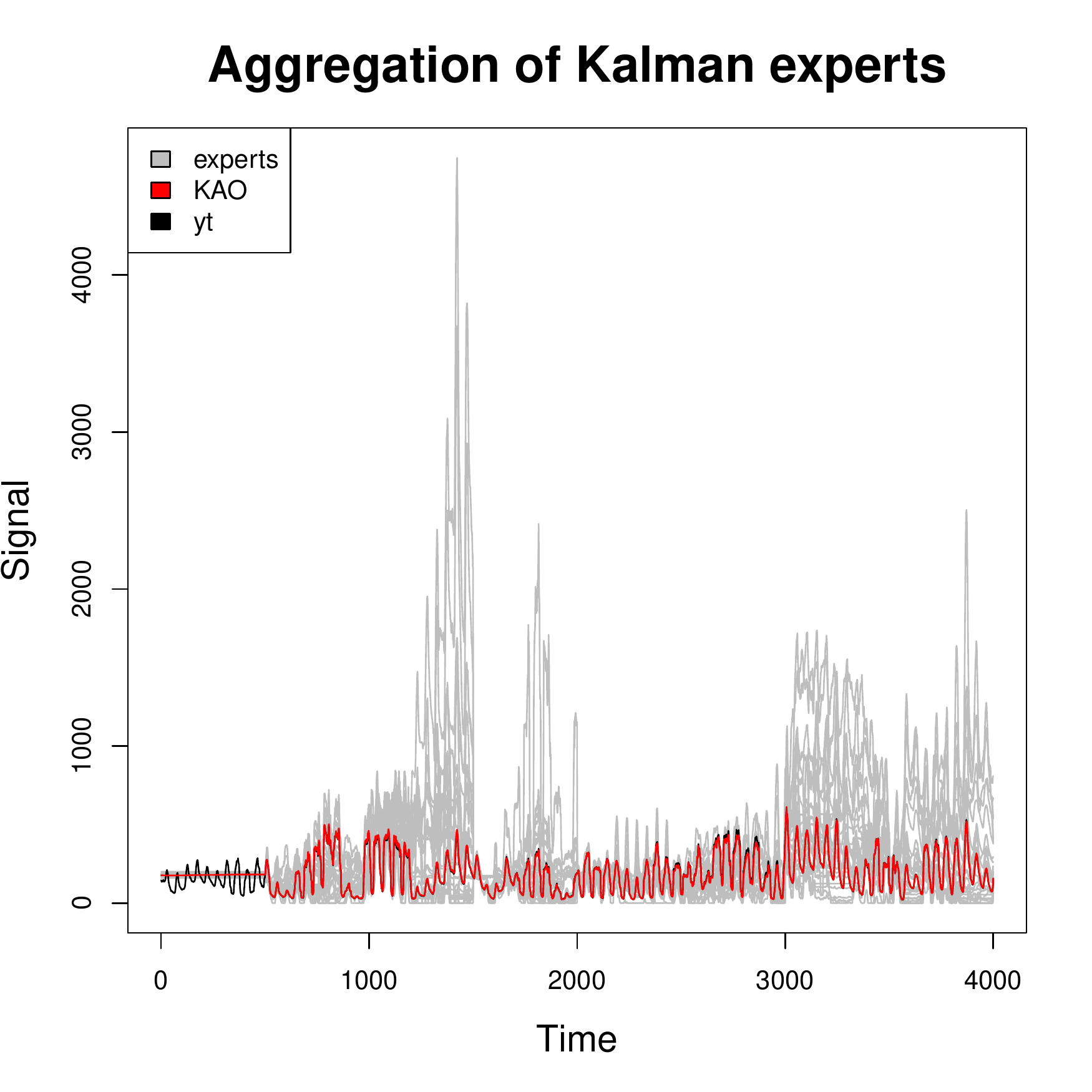} }
\quad 
\subfigure{\label{fig:cumulError1}
\includegraphics[height=0.30\textheight,trim=0 0 0 0,clip]{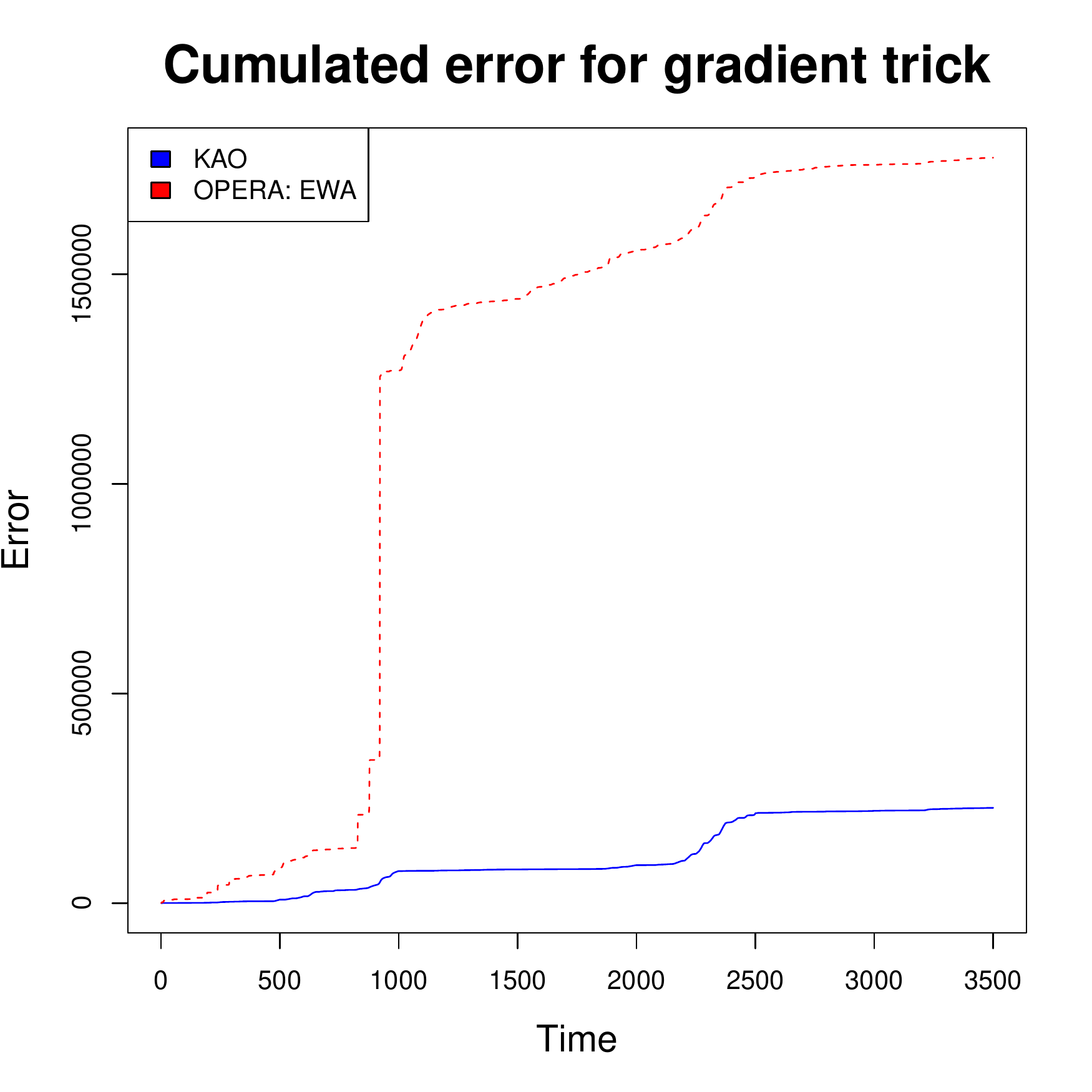} }
\caption{\textbf{One hour ahead prediction of $y_t$ using KAO, and cumulated prediction errors for KAO and OPERA}. The left panel shows one hour ahead prediction of $y_t$ using KAO and $\eta$ within a grid of values. The value of $\eta$ that minimizes the MSE is utilized to perform the prediction. These predictions are done in the case where the oracle is the best expert. The right panel shows the cumulated prediction errors for KAO and OPERA using the Kalman experts. the blue line represents the Kalman aggregation error, and the red one represents the error of the aggregation coming from the opera package. These predictions are done in the case where the oracle is the best convex combination of the Kalman experts\label{fig:kalman2_error2}.} 
\end{figure}

Each Kalman expert is computed in the sequential way as follows. We begin by fitting the model using the first observations ($y_1,\dots,y_{\mathrm{window}}$) contained in a window. Then the fitted model is utilized to predict the observations contained in a window ahead (i.e., $y_{\mathrm{window}+1},\dots,y_{2\mathrm{window}}$). At the $p$th step we use the observations $y_1,\dots,y_{p\mathrm{window}}$ to fit the model that is utilized to predict $y_{p\mathrm{window}+1},\dots,y_{(p+1)\mathrm{window}}$. We chose $\mathrm{window}=500$ as a good trade-off between a correct number of observations to estimate the state-space models and a good adaptation to changes. The prediction resulting from this procedure is called the Kalman expert and we, therefore, have $28$ Kalman experts.

 Simulations are done under the R software~\citep{r2019} and the predictive performance of the Kalman experts aggregation using KAO is compared with the aggregation performed using the R package \textit{opera}~\citep{opera2016} and the aggregation procedures therein. The aggregation obtained from the package opera is named OPERA when we are competing with the best expert and do not want to mention any specific aggregation procedure. We make one hour ahead prediction using KAO on the $28$ Kalman experts. In the case where the oracle is the best Kalman expert, the resulting prediction is plotted by a red curve in Figure~\ref{fig:kalman1_error1} at the left panel, where the signal $y_t$ is plotted by a black curve, and the experts are plotted using the gray color. We can see that the red line tracks well the black one, meaning that the aggregation from KAO performs well its prediction. More precisely, the MSE of KAO is $66.507$ which is approximately equal to the MSE of the best Kalman expert ($66.503$), and the MSE of OPERA is $253.06$. The right panel of Figure~\ref{fig:kalman1_error1} shows the cumulated error of KAO (in blue color) and OPERA (in red color). 
 We can see that KAO performs better than OPERA. Though both KAO and OPERA (precisely, EWA or BOA procedure) are based on exponential weights, the difference seen in their respective cumulated errors can be explained by the fact that KAO takes into account the underlying models that provide the experts, and OPERA doesn't have this information.

In the case where the oracle is the best convex combination of the Kalman experts, the one hour ahead predictions of $y_t$, using KAO, are plotted in the left panel of Figure~\ref{fig:kalman2_error2} in red color and the Kalman experts are plotted in gray color. We can also see that KAO tracks well the signal $y_t$ that is plotted in black color. Here, the MSE of KAO is $65.02$ against $223.37$ for OPERA, using the procedure BOA~\citep{wintenberger2017optimal}. The corresponding cumulated errors are plotted in the right panel for KAO (in blue color) and OPERA (in red color). The curves of the cumulated errors show that KAO has a better predictive performance than OPERA.

\begin{figure}[!h] 
\centering
\subfigure{ \label{fig:mse1}
\includegraphics[height=0.30\textheight,trim=0 0 0 0,clip]{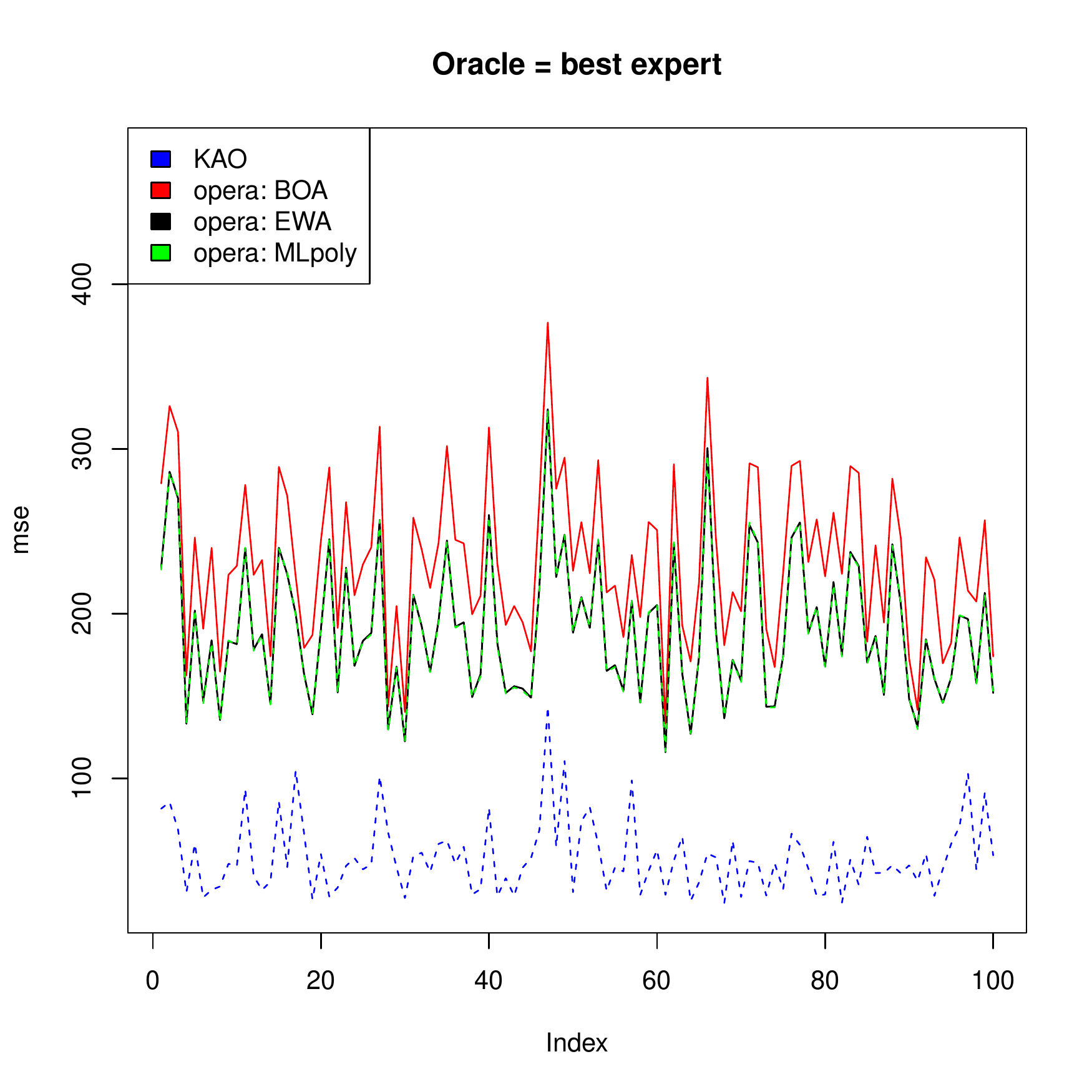} }
\quad 
\subfigure{\label{fig:mse2}
\includegraphics[height=0.30\textheight,trim=0 0 0 0,clip]{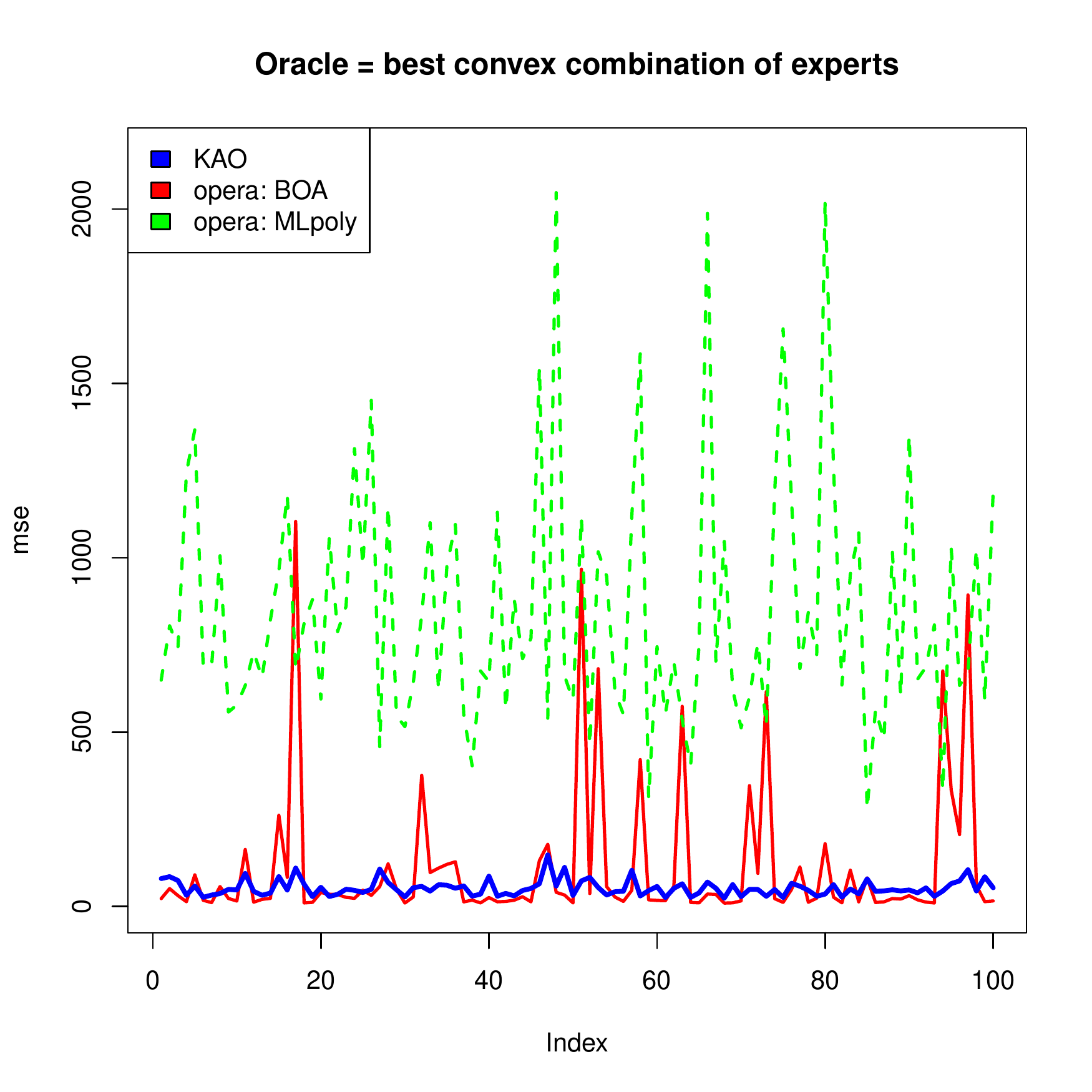} }
\caption{\textbf{MSE of $100$ aggregations of Kalman experts  using KAO and the procedures contained in opera.} The left panel shows the curves of computed mse, where each aggregation procedure competes with the best expert. KAO is plotted in blue color, BOA in red color, EWA (Exponentially Weighted Average ) in black and MLpoly~\citep{gaillard2014second} in green color. The right panel shows the curves of the mse computed when the aggregation competes with the best convex combination of experts. KAO is in blue color, BOA in red color and MLpoly in green color.}
\label{fig:mse}
\end{figure}

We simulate $100$ collections of Kalman experts corresponding to $100$ simulated datasets. Each collection of Kalman experts contains $28$ different experts. We then perform the aggregation of each collection of Kalman experts using KAO and the procedures within OPERA for each type of oracle. The MSE of the aggregations are computed and plotted in Figure~\ref{fig:mse}. The left panel (Figure~\ref{fig:mse1}) shows the curves of the MSE of the aggregations performed in the case where the oracle is the best expert. KAO (dashed blue curve) presents the lowest MSE within all the aggregation procedures, followed by MLpoly and EWA. The right panel (Figure~\ref{fig:mse2}) shows the aggregations' MSE in the case where the oracle is the best convex combination of the experts. We can see that KAO (blue curve) has not only the best MSE but also presents more stability than BOA (red curve) and MLpoly (dashed green curve). Here, reversely to the case where the aggregation competes with the best expert, BOA is better than MLpoly. This simulation study seems to point out that it may be worth of interest to take into account the underlying model that generates the experts when aggregating them.

\begin{table}[h!]
 \scriptsize
\caption{{\bf  Root Mean Square Error square of different aggregation procedures (relative to RMSE of the best convex combination). Kalman Experts.}}
 \begin{center}
\begin{tabular}[c]{ccc}
 \toprule
\bf Procedure & \bf rmse (with GT) & \bf rmse (without GT) \\
 \midrule 
 \multirow{1}{2cm}{Best expert}
& 1.15 & 1.15 \\ \\
\multirow{1}{2cm}{Uniform}
& 1.11 & 1.11 \\ \\
\multirow{1}{2cm}{MLpoly}
& 1.06 & 1.16 \\ \\
\multirow{1}{2cm}{BOA}
& 1.07 & 1.11 \\ \\
\multirow{1}{2cm}{KAO}
& 1.05 & 1.07 \\ \\
\multirow{1}{2cm}{Best convex}
& 1 & 1 \\
\bottomrule
\end{tabular}
 \end{center}
 \label{tab:rmse1}
\end{table} 
\section{Application}
In this section we apply the KAO algorithm to aggregate ten experts $f_{m,t}, 1\le m\le M\ $  that are meant to predict the daily electricity consumption in France (see \cite{Ba12},  \cite{design_expert2014} for previous work on french load data) at times $t \in (1,2,...,T)$. These experts are provided by different models that are black boxes. Thus, we consider the expert setting previously defined in \ref{subsec:kalman_rec}. For each expert we stack their predictions $f_{m,t}\in \R$ in $X_t^{(m)}$ together with the intercept and the past error $e_{m,t-1}=(y_{t-1}-f_{m,t-1})$, i.e.,
$$
X_t^{(m)}=(1,f_{m,t},e_{m,t-1}), \qquad t\ge 1\,.
$$

and each state-space model $m$ is defined by the state equation:

$$
\theta_{t}^{(m)} =  \theta_{t-1}^{(m)} +z_t^{(m)} \,,\qquad t\ge 1.
$$

the covariance matrices  $Q^{(m)}, 1\le m\le M\ $ and the variance of the noise $\sigma^{2(m)}$ are estimated using an EM algorithm on the first half of the data ($t \in (1,2,...,T/2)$) and we use the second half to evaluate KAO performances and compare it to other aggregation rules.

\begin{table}
 \scriptsize
\caption{{\bf Root Mean Square Error square of different aggregation procedures (relative to RMSE of the best convex combination). AR experts.}}
 \begin{center}
\begin{tabular}[c]{ccc}
 \toprule
\bf Procedure & \bf rmse (with GT) & \bf rmse (without GT) \\
 \midrule 
  \multirow{1}{2cm}{Best expert}
& 1.18 & 1.18 \\ \\
\multirow{1}{2cm}{Uniform}
& 1.11 & 1.11 \\ \\
\multirow{1}{2cm}{MLpoly}
& 1.07 & 1.19 \\ \\
\multirow{1}{2cm}{BOA}
& 1.07 & 1.09 \\ \\
\multirow{1}{2cm}{Best convex}
& 1 & 1 \\
\bottomrule
\end{tabular}
 \end{center}
 \label{tab:rmse2}
\end{table} 
The predictive risk of $\hat{y}^{(m)}_t=X_t^{(m)}\theta_{t}^{(m)}$ is used for computing the loss and the pseudo-loss that are needed to perform KAO. The aggregation performance of KAO (on these experts) is compared with that of both MLpoly and BOA that are two aggregation procedures available in the opera package. The results are contained in Table~\ref{tab:rmse1} where GT means Gradient Trick. GT, therefore, refers to the case where the oracle of the aggregation procedure is the experts' best convex combination. For confidentiality reasons, errors are expressed relatively to the RMSE of the best convex combination.
\begin{figure}[h!] 
\centering
\subfigure{ \label{fig:kao_weights}
\includegraphics[height=0.30\textheight,trim=0 0 0 0,clip]{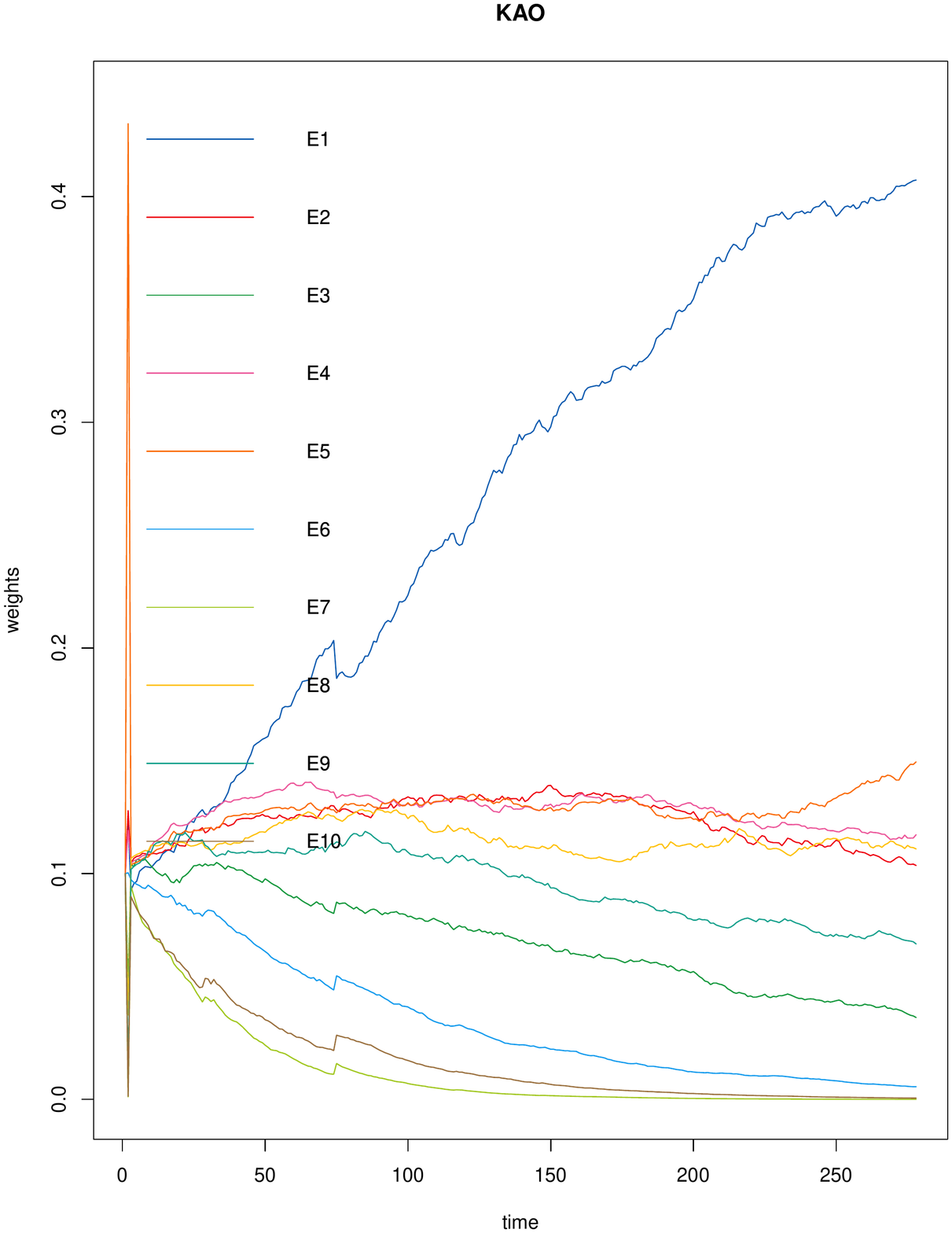} }
\quad 
\subfigure{\label{fig:mlpol_weights}
\includegraphics[height=0.30\textheight,trim=0 0 0 0,clip]{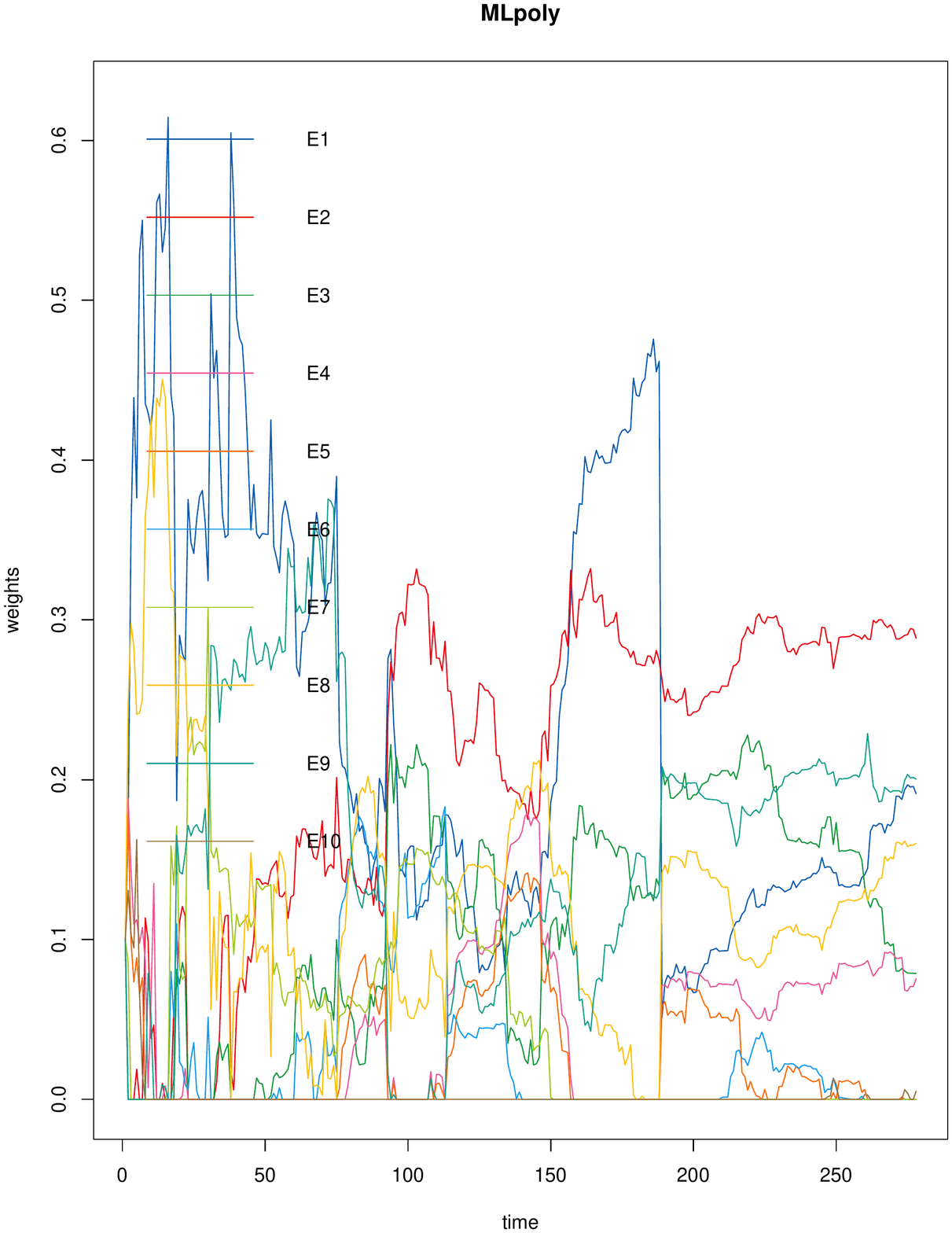} }
\caption{\textbf{Experts weight according to KAO and MLpoly, using the gradient trick}. The left panel shows the weights assigned to the corrected experts by KAO where the oracle is the best experts combination. The right panel shows the weights assigned by MLpoly, using the gradient trick. The experts are denoted by $\mathrm{E1},\dots,\mathrm{E10}$.} 
\label{fig:weights}
\end{figure}

The \textit{uniform} procedure is the experts mean, and the procedure \textit{best convex} is indeed the experts' best convex combination. All of these procedures are performed on the corrected experts. We clearly see that KAO performs slightly better (rmse $=1.05$ with GT and rmse $=1.07$ without GT) than both MLpoly (rmse $=1.06$ with GT and rmse $=1.16$ without GT) and BOA (rmse $=1.07$ with GT and rmse $=1.11$ without GT). In order to check if the Kalman correction is worth of interest, we make a direct autoregressive correction of the experts that are then aggregated using MLpoly and BOA (not KAO as we need an estimate of the risk for that). The results are contained in Table~\ref{tab:rmse2} and show that all the procedures are less accurate when the Kalman correction is not applied. 

The weights that are assigned to the corrected experts (by KAO and MLpoly) are plotted in Figure~\ref{fig:weights}. The weights coming from KAO are more smooth (see Figure~\ref{fig:kao_weights}) than those provided by MLpoly (see Figure~\ref{fig:mlpol_weights}). This smoothness of KAO weights can be explained by the fact that the procedure uses the underlying properties of the model that provide the experts. This information is used to anticipate the forthcoming performance of each expert.

\section{Conclusion}
In this paper, we show that the prediction obtained by aggregating the predictions coming from a finite set of experts can be improved by taking into account the properties of the underlying models that provide the experts' prediction.
We place ourselves in the case where all the predictions provided by the experts come from fitting state-space models using Kalman recursions. By using exponential weights, two settings are considered: 1) the aggregation competes with the best expert (also considered as model selection), and 2) the aggregation competes with the best convex combination of the experts. We consider adaptive multiple learning rates in order to achieve the optimal rates in these two schemes for a unique procedure. The quality of the aggregation's prediction has been improved by taking advantage of the full knowledge of the Kalman experts, using their predictive risk in an unbounded well-specified setting. 
In the simulations studies, we notice a great recovery of stability of KAO (our aggregation procedure), where all other existing aggregation procedures may be sometime somewhat unstable, potentially due to the lack of boundedness of the responses. The aggregation procedure KAO is also applied to some existing experts coming from unknown models, where we suggest correcting the errors of the experts using Kalman recursions. This strategy allows for approximating the theoretical weights needed for KAO and shows a quite important increase in the accuracy of the aggregation. In the case where the errors of the experts show no stationary behavior (for example, when there exist some cluster of variance), it should be interesting to adapt the fitting of the underlying state-space model in order to remain accurate.

\section*{References}

\bibliography{\jobname}

\end{document}

Our strategy is to 1) construct 30 advised starting points using the data, by bootstrapping. These advised starting points are used to run the Cmlme algorithm and fit the bivariate model to the data. The estimates related to the best likelihood (the minimum of the 30 deviances) are viewed as the true parameters and we compute the relative errors and their 95\% CI. 2) We do the same thing using 30 naive starting points and compare the results to those obtained in 1). The obtained results are in Table1.

\begin{proof}
Denoting by $f_{\mathcal{X}}(.)$ the density function of any random vector $\mathcal{X}$, 

\begin{equation}
f_{\boldsymbol{\mathcal{Y}}}(\boldsymbol{y})=\int_{\mathbb{R}^{q_1+q_2}}f_{\boldsymbol{\mathcal{Y}},\boldsymbol{\mathcal{U}}}(\boldsymbol{y},\boldsymbol{u})d\boldsymbol{u}
\end{equation}

where,

\begin{eqnarray}
f_{\boldsymbol{\mathcal{Y}},\boldsymbol{\mathcal{U}}}(\boldsymbol{y},\boldsymbol{u})&=&f_{\boldsymbol{\mathcal{Y}}|\boldsymbol{\mathcal{U}}}(\boldsymbol{y}|\boldsymbol{u})f_{\boldsymbol{\mathcal{U}}}(\boldsymbol{u})=f_{\mathcal{Y}_1|\mathcal{U}_1}(y_1|u_1)f_{\mathcal{Y}_2|\mathcal{U}_2}(y_2|u_2)f_{\boldsymbol{\mathcal{U}}}(\boldsymbol{u})\nonumber\\
&=&(2\pi\sigma_1^2)^{-\frac{N}{2}}(2\pi\sigma_2^2)^{-\frac{N}{2}}(2\pi)^{-\frac{q_1+q_2}{2}}|\Sigma_{\boldsymbol{u}}|^{-\frac{1}{2}}\exp\left(-\frac{\|y_1-X_1\beta_1-Z_1\Lambda_{\theta_1}u_1\|^2}{2\sigma_1^2}\right.\nonumber\\
&&\left.-\frac{\|y_2-X_2\beta_2-Z_2\Lambda_{\theta_2}u_2\|^2}{2\sigma_2^2}-\frac{1}{2}\boldsymbol{u}^\top\Sigma_{\boldsymbol{u}}^{-1}\boldsymbol{u}\right)
\end{eqnarray}

Let us denote by $\widetilde{\Sigma}$ the matrix such that 

\begin{equation}
\Sigma_{\boldsymbol{u}}^{-1}=\widetilde{\Sigma}^\top\widetilde{\Sigma}.
\end{equation}

It then comes that $\boldsymbol{u}^\top\Sigma_{\boldsymbol{u}}^{-1}\boldsymbol{u}=\|\widetilde{\Sigma}\boldsymbol{u}\|^2$ and

\begin{eqnarray}
&&\frac{\|y_1-X_1\beta_1-Z_1\Lambda_{\theta_1}u_1\|^2}{\sigma_1^2}+\frac{\|y_2-X_2\beta_2-Z_2\Lambda_{\theta_2}u_2\|^2}{\sigma_2^2}+\boldsymbol{u}^\top\Sigma_{\boldsymbol{u}}^{-1}\boldsymbol{u}\nonumber\\
&=&\frac{\|\sqrt{\sigma_2^2}(y_1-X_1\beta_1-Z_1\Lambda_{\theta_1}u_1)\|^2+\|\sqrt{\sigma_1^2}(y_2-X_2\beta_2-Z_2\Lambda_{\theta_2}u_2)\|^2+\|\sqrt{\sigma_1^2\sigma_2^2}\widetilde{\Sigma}\boldsymbol{u}\|^2}{\sigma_1^2\sigma_2^2}\nonumber
\end{eqnarray}

\begin{eqnarray}
&&\|\sqrt{\sigma_2^2}(y_1-X_1\beta_1-Z_1\Lambda_{\theta_1}u_1)\|^2+\|\sqrt{\sigma_1^2}(y_2-X_2\beta_2-Z_2\Lambda_{\theta_2}u_2)\|^2+\|\sqrt{\sigma_1^2\sigma_2^2}\widetilde{\Sigma}\boldsymbol{u}\|^2\nonumber\\
&=&\left\|\begin{pmatrix}\sqrt{\sigma_2^2}(y_1-X_1\beta_1-Z_1\Lambda_{\theta_1}u_1)\\\sqrt{\sigma_1^2}(y_2-X_2\beta_2-Z_2\Lambda_{\theta_2}u_2)\end{pmatrix}\right\|^2+\|\sqrt{\sigma_1^2\sigma_2^2}\widetilde{\Sigma}\boldsymbol{u}\|^2\nonumber\\
&=&\left\|\begin{pmatrix}\sqrt{\sigma_2^2}(y_1-X_1\beta_1)\\\sqrt{\sigma_1^2}(y_2-X_2\beta_2)\end{pmatrix}-\begin{pmatrix}\sqrt{\sigma_2^2}Z_1\Lambda_{\theta_1}&\boldsymbol{0}_{Nq_2}\\\boldsymbol{0}_{Nq_1}&\sqrt{\sigma_1^2}Z_2\Lambda_{\theta_2}\end{pmatrix}\boldsymbol{u}\right\|^2+\|\sqrt{\sigma_1^2\sigma_2^2}\widetilde{\Sigma}\boldsymbol{u}\|^2\label{eq_gu}\\
&=&\left\|\begin{pmatrix}\sqrt{\sigma_2^2}(y_1-X_1\beta_1)\\\sqrt{\sigma_1^2}(y_2-X_2\beta_2)\\\boldsymbol{0}_{q_1+q_2}\end{pmatrix}-\begin{pmatrix}\sqrt{\sigma_2^2}Z_1\Lambda_{\theta_1}&\boldsymbol{0}_{Nq_2}\\\boldsymbol{0}_{Nq_1}&\sqrt{\sigma_1^2}Z_2\Lambda_{\theta_2}\\\sqrt{\sigma_1^2\sigma_2^2}\widetilde{\Sigma}\end{pmatrix}\boldsymbol{u}\right\|^2\\
&=&\left\|\begin{pmatrix}\sqrt{\sigma_2^2}y_1\\\sqrt{\sigma_1^2}y_2\\\boldsymbol{0}_{q_1+q_2}\end{pmatrix}-\begin{pmatrix}\sqrt{\sigma_2^2}X_1&\boldsymbol{0}_{Np_2}&\sqrt{\sigma_2^2}Z_1\Lambda_{\theta_1}&\boldsymbol{0}_{Nq_2}\\\boldsymbol{0}_{Np_1}&\sqrt{\sigma_1^2}X_2&\boldsymbol{0}_{Nq_1}&\sqrt{\sigma_1^2}Z_2\Lambda_{\theta_2}\\&\boldsymbol{0}_{q_1+q_2,p_1+p_2}&&\sqrt{\sigma_1^2\sigma_2^2}\tilde{\Sigma}\end{pmatrix}\begin{pmatrix}\boldsymbol{\beta}\\ \boldsymbol{u}\end{pmatrix} \right\|^2\\
&=&\left\| Y_\Lambda-Z_{X\Lambda}\begin{pmatrix}\boldsymbol{\beta}\\ \boldsymbol{u}\end{pmatrix}\right\|^2\\
&=&g(\boldsymbol{\beta},\boldsymbol{u},\boldsymbol{\theta},\rho,\boldsymbol{\sigma})
\end{eqnarray}

\begin{eqnarray}
\begin{pmatrix}\widehat{\boldsymbol{\beta}}_{\boldsymbol{\theta},\rho,\boldsymbol{\sigma}}\\ \mu_{\boldsymbol{\mathcal{U}}|\boldsymbol{\mathcal{Y}}=\boldsymbol{y}}\end{pmatrix}=\argmin_{\boldsymbol{u},\boldsymbol{\beta}} g(\boldsymbol{\beta},\boldsymbol{u},\boldsymbol{\theta},\rho,\boldsymbol{\sigma})&\iff& Z_{X\Lambda}^\top Z_{X\Lambda}\begin{pmatrix}\widehat{\boldsymbol{\beta}}_{\boldsymbol{\theta},\rho,\boldsymbol{\sigma}}\\ \mu_{\boldsymbol{\mathcal{U}}|\boldsymbol{\mathcal{Y}}=\boldsymbol{y}}\end{pmatrix}=Z_{X\Lambda}^\top Y_\Lambda \text{ (normal eq.)}\label{normal_equation}\nonumber\\
\end{eqnarray}

with

\begin{equation}
Z_{X\Lambda}^\top Z_{X\Lambda}=
\begin{pmatrix}
X_{\boldsymbol{\sigma}}^\top X_{\boldsymbol{\sigma}}&X_{\boldsymbol{\sigma}}^\top Z_{\boldsymbol{\sigma}\boldsymbol{\theta}}\\
Z_{\boldsymbol{\sigma}\boldsymbol{\theta}}^\top X_\sigma&Z_{\boldsymbol{\sigma}\boldsymbol{\theta}}^\top Z_{\boldsymbol{\sigma}\boldsymbol{\theta}}+\sqrt{\sigma_1^2\sigma_2^2}\Sigma_{\boldsymbol{u}}^{-1}
\end{pmatrix}
\quad
\text{ and }
\quad
Z_{X\Lambda}^\top Y_\Lambda=
\begin{pmatrix}
X_{\boldsymbol{\sigma}}^\top\\Z_{\boldsymbol{\sigma}}^\top
\end{pmatrix}
Y_{\boldsymbol{\sigma}}
\end{equation}

By setting $p=p_1+p_2$, $dim(Z_{X\Lambda})=(2N+q)\times(p+q)$ and $S=Im(Z_{X\Lambda})$ is a subspace of $\mathbb{R}^{2N+q}$. $Y_\Lambda\in\mathbb{R}^{2N+q}$ and $Z_{X\Lambda}\begin{pmatrix}\widehat{\boldsymbol{\beta}}_{\boldsymbol{\theta},\rho,\boldsymbol{\sigma}}\\ \mu_{\boldsymbol{\mathcal{U}}|\boldsymbol{\mathcal{Y}}=\boldsymbol{y}}\end{pmatrix}$ is the orthogonal projection of $Y_\Lambda$ on $S$. Then, 

\begin{equation}
Z_{X\Lambda}u\perp\left[Y_\Lambda-Z_{X\Lambda}\begin{pmatrix}\widehat{\boldsymbol{\beta}}_{\boldsymbol{\theta},\rho,\boldsymbol{\sigma}}\\ \mu_{\boldsymbol{\mathcal{U}}|\boldsymbol{\mathcal{Y}}=\boldsymbol{y}}\end{pmatrix}\right], \forall u\in \mathbb{R}^{p+q}.
\end{equation}

And $g(\boldsymbol{\beta},\boldsymbol{u},\boldsymbol{\theta},\rho,\boldsymbol{\sigma})$ can then be rewritten as:

\begin{eqnarray}
g(\boldsymbol{\beta},\boldsymbol{u},\boldsymbol{\theta},\rho,\boldsymbol{\sigma})&=&\left\| Y_\Lambda-Z_{X\Lambda}\begin{pmatrix}\boldsymbol{\beta}\\\boldsymbol{u}\end{pmatrix}+Z_{X\Lambda}\begin{pmatrix}\widehat{\boldsymbol{\beta}}_{\boldsymbol{\theta},\rho,\boldsymbol{\sigma}}\\ \mu_{\boldsymbol{\mathcal{U}}|\boldsymbol{\mathcal{Y}}=\boldsymbol{y}}\end{pmatrix}-Z_{X\Lambda}\begin{pmatrix}\widehat{\boldsymbol{\beta}}_{\boldsymbol{\theta},\rho,\boldsymbol{\sigma}}\\ \mu_{\boldsymbol{\mathcal{U}}|\boldsymbol{\mathcal{Y}}=\boldsymbol{y}}\end{pmatrix} \right\|^2\\
&=&\left\|Y_\Lambda-Z_{X\Lambda}\begin{pmatrix}\widehat{\boldsymbol{\beta}}_{\boldsymbol{\theta},\rho,\boldsymbol{\sigma}}\\ \mu_{\boldsymbol{\mathcal{U}}|\boldsymbol{\mathcal{Y}}=\boldsymbol{y}}\end{pmatrix}  \right\|^2+\left\|Z_{X\Lambda} \begin{pmatrix}\boldsymbol{\beta}-\widehat{\boldsymbol{\beta}}_{\boldsymbol{\theta},\rho,\boldsymbol{\sigma}}\\\boldsymbol{u}- \mu_{\boldsymbol{\mathcal{U}}|\boldsymbol{\mathcal{Y}}=\boldsymbol{y}}\end{pmatrix} \right\|^2\\
&=&\left\|Y_\Lambda-Z_{X\Lambda}\begin{pmatrix}\widehat{\boldsymbol{\beta}}_{\boldsymbol{\theta},\rho,\boldsymbol{\sigma}}\\ \mu_{\boldsymbol{\mathcal{U}}|\boldsymbol{\mathcal{Y}}=\boldsymbol{y}}\end{pmatrix}  \right\|^2+\begin{pmatrix}\boldsymbol{\beta}-\widehat{\boldsymbol{\beta}}_{\boldsymbol{\theta},\rho,\boldsymbol{\sigma}}\\\boldsymbol{u}- \mu_{\boldsymbol{\mathcal{U}}|\boldsymbol{\mathcal{Y}}=\boldsymbol{y}}\end{pmatrix}^\top Z_{X\Lambda}^\top Z_{X\Lambda} \begin{pmatrix}\boldsymbol{\beta}-\widehat{\boldsymbol{\beta}}_{\boldsymbol{\theta},\rho,\boldsymbol{\sigma}}\\\boldsymbol{u}- \mu_{\boldsymbol{\mathcal{U}}|\boldsymbol{\mathcal{Y}}=\boldsymbol{y}}\end{pmatrix}\nonumber\\
\end{eqnarray}

$Z_{X\Lambda}^\top Z_{X\Lambda}$ can be Cholesky decomposed as

\begin{equation}
Z_{X\Lambda}^\top Z_{X\Lambda}=
\begin{pmatrix}
R_X&\boldsymbol{0}\\R_{ZX}&L_{\boldsymbol{\theta},\rho,\boldsymbol{\sigma}}^\top
\end{pmatrix}^\top
\begin{pmatrix}
R_X&\boldsymbol{0}\\R_{ZX}&L_{\boldsymbol{\theta},\rho,\boldsymbol{\sigma}}^\top
\end{pmatrix},
\end{equation}

where 

\begin{equation}
L_{\boldsymbol{\theta},\rho,\boldsymbol{\sigma}}L_{\boldsymbol{\theta},\rho,\boldsymbol{\sigma}}^\top=
Z_{\boldsymbol{\sigma}\boldsymbol{\theta}}^\top Z_{\boldsymbol{\sigma}\boldsymbol{\theta}}+\sqrt{\sigma_1^2\sigma_2^2}\Sigma_{\boldsymbol{u}}^{-1}
\end{equation}

Thereafter,

\begin{eqnarray}
g(\boldsymbol{\beta},\boldsymbol{u},\boldsymbol{\theta},\rho,\boldsymbol{\sigma})&=&\left\|Y_{\boldsymbol{\sigma}}-X_{\boldsymbol{\sigma}}\widehat{\boldsymbol{\beta}}_{\boldsymbol{\theta},\rho,\boldsymbol{\sigma}}-Z_{\boldsymbol{\sigma}\boldsymbol{\theta}}\mu_{\boldsymbol{\mathcal{U}}|\boldsymbol{\mathcal{Y}}=\boldsymbol{y}}\right\|^2 +\sigma_1^2\sigma_2^2\mu_{\boldsymbol{\mathcal{U}}|\boldsymbol{\mathcal{Y}}=\boldsymbol{y}}^\top\Sigma_{\boldsymbol{u}}^{-1}\mu_{\boldsymbol{\mathcal{U}}|\boldsymbol{\mathcal{Y}}=\boldsymbol{y}}\nonumber\\
&&+ \left\|R_X(\boldsymbol{\beta}-\widehat{\boldsymbol{\beta}}_{\boldsymbol{\theta},\rho,\boldsymbol{\sigma}}) \right\|^2+ \left\|R_{ZX}(\boldsymbol{\beta}-\widehat{\boldsymbol{\beta}}_{\boldsymbol{\theta},\rho,\boldsymbol{\sigma}})+L_{\boldsymbol{\theta},\rho,\boldsymbol{\sigma}}^\top(\boldsymbol{u}- \mu_{\boldsymbol{\mathcal{U}}|\boldsymbol{\mathcal{Y}}=\boldsymbol{y}}) \right\|^2\nonumber\\
\end{eqnarray}

By setting 
\begin{equation}
r(\widehat{\boldsymbol{\beta}}_{\boldsymbol{\theta},\rho,\boldsymbol{\sigma}},\mu_{\boldsymbol{\mathcal{U}}|\boldsymbol{\mathcal{Y}}=\boldsymbol{y}})=\left\|Y_{\boldsymbol{\sigma}}-X_{\boldsymbol{\sigma}}\widehat{\boldsymbol{\beta}}_{\boldsymbol{\theta},\rho,\boldsymbol{\sigma}}-Z_{\boldsymbol{\sigma}\boldsymbol{\theta}}\mu_{\boldsymbol{\mathcal{U}}|\boldsymbol{\mathcal{Y}}=\boldsymbol{y}}\right\|^2 +\sigma_1^2\sigma_2^2\mu_{\boldsymbol{\mathcal{U}}|\boldsymbol{\mathcal{Y}}=\boldsymbol{y}}^\top\Sigma_{\boldsymbol{u}}^{-1}\mu_{\boldsymbol{\mathcal{U}}|\boldsymbol{\mathcal{Y}}=\boldsymbol{y}},
\end{equation}
and returning to the calculation of $f_{\boldsymbol{\mathcal{Y}}}(\boldsymbol{y})$, it comes

\begin{eqnarray}
f_{\boldsymbol{\mathcal{Y}}}(\boldsymbol{y})&=&\frac{\int\exp\left[-\frac{r(\widehat{\boldsymbol{\beta}}_{\boldsymbol{\theta},\rho,\boldsymbol{\sigma}},\mu_{\boldsymbol{\mathcal{U}}|\boldsymbol{\mathcal{Y}}=\boldsymbol{y}})+\left\|R_X(\boldsymbol{\beta}-\widehat{\boldsymbol{\beta}}_{\boldsymbol{\theta},\rho,\boldsymbol{\sigma}})\right\|^2 + \left\|R_{ZX}(\boldsymbol{\beta}-\widehat{\boldsymbol{\beta}}_{\boldsymbol{\theta},\rho,\boldsymbol{\sigma}})+L_{\boldsymbol{\theta},\rho,\boldsymbol{\sigma}}^\top(\boldsymbol{u}- \mu_{\boldsymbol{\mathcal{U}}|\boldsymbol{\mathcal{Y}}=\boldsymbol{y}})\right\|^2}{2\sigma_1^2\sigma_2^2}\right]d\boldsymbol{u}}{(2\pi\sigma_1^2)^{N/2}(2\pi\sigma_2^2)^{N/2}(2\pi)^{q/2}|\Sigma_{\boldsymbol{u}}|^{1/2}}\nonumber\\
&=&\frac{\exp\left[-\frac{r(\widehat{\boldsymbol{\beta}}_{\boldsymbol{\theta},\rho,\boldsymbol{\sigma}},\mu_{\boldsymbol{\mathcal{U}}|\boldsymbol{\mathcal{Y}}=\boldsymbol{y}})+\left\|R_X(\boldsymbol{\beta}-\widehat{\boldsymbol{\beta}}_{\boldsymbol{\theta},\rho,\boldsymbol{\sigma}})\right\|^2}{2\sigma_1^2\sigma_2^2}\right]}{(2\pi\sigma_1^2)^{N/2}(2\pi\sigma_2^2)^{N/2}(2\pi)^{q/2}|\Sigma_{\boldsymbol{u}}|^{1/2}}\times\nonumber\\
&&\int\exp\left[-\frac{\left\|R_{ZX}(\boldsymbol{\beta}-\widehat{\boldsymbol{\beta}}_{\boldsymbol{\theta},\rho,\boldsymbol{\sigma}})+L_{\boldsymbol{\theta},\rho,\boldsymbol{\sigma}}^\top(\boldsymbol{u}- \mu_{\boldsymbol{\mathcal{U}}|\boldsymbol{\mathcal{Y}}=\boldsymbol{y}})\right\|^2}{2\sigma_1^2\sigma_2^2}\right]d\boldsymbol{u}\nonumber\\
\end{eqnarray}

By setting $\boldsymbol{v}=R_{ZX}(\boldsymbol{\beta}-\widehat{\boldsymbol{\beta}}_{\boldsymbol{\theta},\rho,\boldsymbol{\sigma}})+L_{\boldsymbol{\theta},\rho,\boldsymbol{\sigma}}^\top(\boldsymbol{u}- \mu_{\boldsymbol{\mathcal{U}}|\boldsymbol{\mathcal{Y}}=\boldsymbol{y}})$, $d\boldsymbol{u}=\frac{1}{|L_{\boldsymbol{\theta},\rho,\boldsymbol{\sigma}}|}d\boldsymbol{v}$ and

\begin{eqnarray}
f_{\boldsymbol{\mathcal{Y}}}(\boldsymbol{y})&=&\frac{\exp\left[-\frac{r(\widehat{\boldsymbol{\beta}}_{\boldsymbol{\theta},\rho,\boldsymbol{\sigma}},\mu_{\boldsymbol{\mathcal{U}}|\boldsymbol{\mathcal{Y}}=\boldsymbol{y}})+\left\|R_X(\boldsymbol{\beta}-\widehat{\boldsymbol{\beta}}_{\boldsymbol{\theta},\rho,\boldsymbol{\sigma}})\right\|^2}{2\sigma_1^2\sigma_2^2}\right](\sigma_1^2\sigma_2^2)^{\frac{q}{2}}}{(2\pi\sigma_1^2)^{N/2}(2\pi\sigma_2^2)^{N/2}|\Sigma_{\boldsymbol{u}}|^{1/2}|L_{\boldsymbol{\theta},\rho,\boldsymbol{\sigma}}|}\int\frac{1}{(2\pi\sigma_1^2\sigma_2^2)^\frac{q}{2}}\exp\left[-\frac{\left\|\boldsymbol{v}\right\|^2}{2\sigma_1^2\sigma_2^2}\right]d\boldsymbol{v}\nonumber\\
&=&\frac{\exp\left[-\frac{r(\widehat{\boldsymbol{\beta}}_{\boldsymbol{\theta},\rho,\boldsymbol{\sigma}},\mu_{\boldsymbol{\mathcal{U}}|\boldsymbol{\mathcal{Y}}=\boldsymbol{y}})+\left\|R_X(\boldsymbol{\beta}-\widehat{\boldsymbol{\beta}}_{\boldsymbol{\theta},\rho,\boldsymbol{\sigma}})\right\|^2}{2\sigma_1^2\sigma_2^2}\right](\sigma_1^2\sigma_2^2)^{\frac{q}{2}}}{(2\pi\sigma_1^2)^{N/2}(2\pi\sigma_2^2)^{N/2}|\Sigma_{\boldsymbol{u}}|^{1/2}|L_{\boldsymbol{\theta},\rho,\boldsymbol{\sigma}}|}
\end{eqnarray}

The log-likelihood to be maximized can therefore be expressed as,

\begin{eqnarray}
\ell(\boldsymbol{\beta},\boldsymbol{\theta},\rho,\boldsymbol{\sigma}|\boldsymbol{y})
&=&-\frac{r(\widehat{\boldsymbol{\beta}}_{\boldsymbol{\theta},\rho,\boldsymbol{\sigma}},\mu_{\boldsymbol{\mathcal{U}}|\boldsymbol{\mathcal{Y}}=\boldsymbol{y}})+\left\|R_X(\boldsymbol{\beta}-\widehat{\boldsymbol{\beta}}_{\boldsymbol{\theta},\rho,\boldsymbol{\sigma}})\right\|^2}{2\sigma_1^2\sigma_2^2}-\frac{N-q}{2}\log(\sigma_1^2\sigma_2^2)\nonumber\\
&&-\frac{1}{2}\log(|\Sigma_{\boldsymbol{u}}|)-\frac{1}{2}\log(|L_{\boldsymbol{\theta},\rho,\boldsymbol{\sigma}}|^2)
\end{eqnarray}
\end{proof}

